\documentclass{article}

\usepackage{microtype}
\usepackage{graphicx}
\usepackage{subfigure}
\usepackage{booktabs} 
\usepackage{hyperref}
\usepackage{fullpage}

\usepackage{hyperref}
\hypersetup{
	colorlinks=true,
	linkcolor=blue,
	citecolor=blue,
	filecolor=blue,      
	urlcolor=blue,
}

\usepackage{amsmath}
\usepackage{amssymb}
\usepackage{mathtools}
\usepackage{amsthm}
\usepackage{float}

\usepackage{booktabs}
\usepackage{multirow}
\usepackage{diagbox}
\usepackage{hhline}
\usepackage{bbm}
\usepackage{extarrows}
\usepackage[capitalize,noabbrev]{cleveref}

\numberwithin {equation} {section}

\title{RPWithPrior: Label Differential Privacy in Regression}
\author{
Haixia Liu\thanks{School of Mathematics and Statistics  \& Institute of Interdisciplinary Research for Mathematics and Applied Science \& Hubei Key Laboratory of Engineering Modeling and Scientific Computing, Huazhong University of Science and Technology, Wuhan, Hubei, China. Email: liuhaixia@hust.edu.cn. The work of H.X. Liu was supported in part by Interdisciplinary Research Program of HUST 2024JCYJ005, National Key Research and Development Program of China 2023YFC3804500.},
Ruifan Huang\thanks{School of Mathematics and Statistics, Huazhong University of Science and Technology, Wuhan, Hubei, China. Email: ruifanhuang@hust.edu.cn.} }
\usepackage{amsthm}
\usepackage{multirow}
\usepackage{natbib}
\usepackage{algorithm,algpseudocode}
\newtheorem{theorem}{Theorem}[section]
\newtheorem{proposition}[theorem]{Proposition}
\newtheorem{lemma}[theorem]{Lemma}
\newtheorem{definition}[theorem]{Definition}
\newtheorem{remark}[theorem]{Remark}

\newcommand{\A}{{\mathcal A}}
\newcommand{\R}{{\mathbb R}}

\newcommand{\N}{{\mathbb N}}
\newcommand{\D}{{\mathcal D}}

\newcommand{\NN}{{\mathcal N}}
\newcommand{\I}{{\mathcal I}}
\renewcommand{\SS}{{\mathcal S}}
\numberwithin{equation}{section}

\begin{document}
\maketitle

% \def\spacingset#1{\renewcommand{\baselinestretch}%
% {#1}\small\normalsize} \spacingset{1}

%%%%%%%%%%%%%%%%%%%%%%%%%%%%%%%%%%%%%%%%%%%%%%%%%%%%%%%%%%%%%%%%%%%%%%%%%%%%%%

% \if1\anon
% {
%   \title{\bf RPWithPrior: Label Differential Privacy in Regression}
%   \author{Haixia Liu\thanks{
%     The authors gratefully acknowledge Interdisciplinary Research Program of HUST 2024JCYJ005, National Key Research and Development Program of China 2023YFC3804500.}\hspace{.2cm}\\
%     School of Mathematics and Statistics\\\& Institute of Interdisciplinary Research for Mathematics and Applied Science \\\& Hubei Key Laboratory of Engineering Modeling and Scientific Computing, \\Huazhong University of Science and Technology \\
%     and \\
%     Ruifan Huang \\
%     School of Mathematics and Statistics,\\
%     Huazhong University of Science and Technology
%     } %\\ Wuhan, Hubei, China. Email: liuhaixia@hust.edu.cn.
%   \maketitle
% } \fi

% \if0\anon
% {
%   \bigskip
%   \bigskip
%   \bigskip
%   \begin{center}
%     {\LARGE\bf Title}
% \end{center}
%   \medskip
% } \fi

% \bigskip
\begin{abstract}
With the wide application of machine learning techniques in practice, privacy preservation has gained increasing attention. Protecting user privacy with minimal accuracy loss is a fundamental task in the data analysis and mining community. In this paper, we focus on regression tasks under $\epsilon$-label differential privacy guarantees. Some existing methods for regression with $\epsilon$-label differential privacy, such as the RR-On-Bins mechanism, discretized the output space into finite bins and then applied RR algorithm. To efficiently determine these finite bins, the authors rounded the original responses down to integer values. However, such operations does not align well with real-world scenarios. To overcome these limitations, we model both original and randomized responses as {\it continuous} random variables, avoiding discretization entirely. Our novel approach estimates an optimal interval for randomized responses and introduces new algorithms designed for scenarios where a prior is either known or unknown. Additionally, we  prove that our algorithm, RPWithPrior, guarantees $\epsilon$-label differential privacy. Numerical results demonstrate that our approach gets better performance compared with the Gaussian, Laplace, Staircase, and RRonBins, Unbiased mechanisms on the Communities and Crime, Criteo Sponsored Search Conversion Log, California Housing datasets. 
\end{abstract}

% \noindent%
% {\it Keywords:} Regression Privacy, 3 to 6 keywords, that do not appear in the title
% \vfill

% \newpage
% \spacingset{1.8} % DON'T change the spacing!

\section{Introduction}
\label{sec:into}
Due to the widespread application of machine learning methods for model training, there has been a significant increase in attention towards the privacy of individual data in recent years. To address this concern and protect the privacy of data used for training, the concept of differential privacy was introduced as a popular notion by \cite{dwork2006calibrating}. 

Differential privacy (DP) aims to quantify the level of privacy protection provided by measuring the differences between two neighboring datasets and the privacy guarantees of a given mechanism. In the context of $\epsilon$-DP, the parameter $\epsilon$ is commonly referred to as a privacy parameter or privacy budget. It serves as a knob to adjust the degree of privacy. Smaller values of $\epsilon$ correspond to stronger privacy guarantees, indicating a higher level of protection for individual data.

A particularly important class of differential privacy is known as label differential privacy (label DP). In label DP, the sensitive information lies in the labels, while the features are considered public and non-sensitive. For formal definitions of DP and Label DP, we refer the reader to the preliminary section in Appendix \ref{sec:preliminary}. 

In this work, we focus on Label DP. A canonical example of label DP is online advertising, where a user's profile and the contextual information of an ad are considered public features, while the ad conversion (the label or response) remains private. Additionally, demographic surveys can also be framed as label DP problems. In a census survey, the demographic information such as gender or age is considered public, while the income is deemed sensitive.  

In the context of DP or label DP for classification problems, the labels are typically drawn from a finite set. However, in regression tasks, the response variable may not necessarily be limited to a finite number of values and could even be uncountable. Some existing methods for achieving $\epsilon$-label differential privacy in regression, such as the RR-On-Bins mechanism \cite{ghazi2022regression} and its variant \cite{Badanidiyuru2023optimal}, relied on discretizing the output space into finite bins and applying RR algorithms, similar to approaches used in classification tasks with $\epsilon$-label DP. To determine finite bins, the RR-On-Bins mechanism and its variant used a dynamic programming approach, which could be resource-intensive in terms of storage requirements. Furthermore, computing optimal bins involved in these methods could be highly time-consuming. To save storage space and computational time, the authors discretized the labels by rounding the original responses down to integer values. The methods of \cite{ghazi2022regression} and \cite{Badanidiyuru2023optimal} fundamentally convert a regression problem into a classification problem within the framework of Label Differential Privacy.

In this paper, we introduce a novel approach to tackle these challenges of label differential privacy in regression from a fundamentally distinct perspective. Unlike existing methods \cite{ghazi2022regression,Badanidiyuru2023optimal} that discretized the output space into finite bins for randomized response, our approach models both original and randomized responses directly as continuous random variables, eliminating discretization-induced errors. We derive an optimal interval for randomized responses and propose a new algorithm designed to maximize the probability of preserving neighborhood relationships between original and randomized responses and guarantee $\epsilon$-label differential privacy. Our contributions can be summarized as follows:

1. {\bf To the best of our knowledge, this work introduces continuous random variables for non-additive mechanisms in label DP.} We propose RPWithPrior, a novel label DP mechanism for regression tasks. The core idea of RPWithPrior is to estimate an optimal interval for randomized responses. We  compute optimal values for $A_1$ and $A_2$ using prior information to determine the optimal interval $[A_1-\zeta,A_2+\zeta]$ in which randomized responses are constrained  (Lemma \ref{lem:subinterval} and Algorithm \ref{alg:optimal_interval}). Here, $\zeta>0$ is a tunable parameter controlling the interval's expansion. If the input (original) response $y$ falls within this interval $[A_1,A_2]$, it is uniformly mapped to its neighboring region $\NN_y=[y-\zeta,y+\zeta]$ with a certain probability, while uniformly assigning the remaining probability to the remaining interval $[A_1-\zeta,A_2+\zeta]\setminus\NN_y$ (see Equation \eqref{eq:cond_prob}). If the original responses lie outside $[A_1,A_2]$, satisfying the label differential privacy is sufficient to generate valid randomized responses (see Remark \ref{remark:express_labeldp}). Algorithm \ref{alg:prior_known}, named as `RPWithPrior', provides the details to obtain randomized responses. Furthermore, we provide a theoretical guarantee that RPWithPrior satisfies $\epsilon$-label differential privacy. % in Theorem \ref{theo:rpwithprior}.

2. {\bf We expand upon the RPWithPrior algorithm to address the scenario where the global prior distribution is unknown, which is also guarantees for $\epsilon$-label DP.} 
Initially, we estimate the global prior distribution using the histogram of the data randomized by Laplace mechanism. Then, we train a regression model on the randomized responses which are drawn from the RPWithPrior algorithm. 

3. {\bf Our proposed method is both efficient and effective}. Our method has lower time complexity, making it more user-friendly for big data computations. 

To assess the performance of our method, we conducted experiments on three datasets: the Communities and Crime dataset \cite{redmond2002data}, the Criteo Sponsored Search Conversion Log dataset \cite{tallis2018reacting}, and the California housing dataset \cite{pace1997sparse}.

We validate the effectiveness of our proposed method by measuring the test mean squared error (MSE) and compare with the Gaussian mechanism, Laplace mechanism, and Staircase mechanism and RRonBins mechanism. Our experimental results consistently reveal that our method achieves the lower error rate in average and standard deviation, highlighting its superior performance. 

To validate the efficiency, we also provide a comparison of the computational complexity, including the runtime and unique number of samples or discretized samples in the training process for the aforementioned mechanisms. %\red{In addition, we provide a comprehensive comparison between some existing methods for regression and ours in Appendix \ref{appendix:comparison}.}

The remainder of this paper is structured as follows: Section \ref{sec:related} reviews related works on differential privacy and label differential privacy in both classification and regression tasks. Section \ref{sec:prior_known} presents our proposed method for achieving label differential privacy in regression with prior known, followed by the method for label differential privacy in regression with prior unknown in Section \ref{sec:prior_unknown}. Numerical implementations are discussed in Section \ref{sec:numerical}. Finally, we conclude with our findings and highlight the limitations of this work in Section \ref{sec:conclusions}.
\section{Related Works}\label{sec:related}
Numerous methods exist for designing DP algorithms, including output or objective perturbation \cite{chaudhuri2011differentially}, DP versions of SGD \cite{song2013stochastic}, functional mechanism \cite{zhang2012functional},  DP-SGD with norm clipping and noise adding \cite{abadi2016deep}, among others. One classical method predating the notion of differential privacy is RR \cite{warner1965randomized}, which was applied to inputs from a finite set. In the RR algorithm, sensitive inputs are mapped to themselves with a certain probability and uniformly to other values with the remaining probability. Unfortunately, the RR algorithm suffers from a large error \cite{chan2012optimal}. Consequently, a significant number of works have been proposed to improve accuracy, often by relaxing to weaker privacy models \cite{acharya2020context, cheu2019distributed, erlingsson2019amplification}. 
The second class of differential privacy methods is the additive noise mechanisms. Popular mechanisms include the Gaussian mechanism \cite{dwork2006our,smith2018differentially, smith2021differentially}, Laplace mechanism \cite{dwork2006calibrating}, staircase mechanism \cite{geng2014optimal}, and others, which add noise to adjacent inputs to calibrate sensitivity. Over the last decade, DP algorithms have been integrated with machine (deep) learning, applying to adversarial learning \cite{phan2020scalable}, multiple parties \cite{shokri2015privacy}, language models \cite{mcmahan2017learning} and knowledge transfer \cite{papernot2016semi}. 
For convenience, open-source libraries that integrate machine learning frameworks have been developed, such as TensorFlow Privacy and PyTorch Opacus.

For label Differential privacy, either classification or regression, numerous researchers have focused on this topic. 
In order to address the label differential privacy problem in classification, \cite{ghazi2021deep} proposed the Label Privacy Multi-Stage Training (LP-MST) algorithm by calibrating the raw user data using the `RRWithPrior' method, where the prior is predicted by the previous model.

Furthermore, 
\cite{malek2021antipodes} presented two approaches, namely PATE and ALIBI, based on the PATE framework and the Laplace mechanism, specifically designed for label-privacy machine learning settings in classification. In the case of PATE, they utilized the PATE framework for semi-supervised learning, achieving an excellent tradeoff between empirical privacy loss and accuracy. On the other hand, ALIBI applied additive Laplace noise to a one-hot encoding of the label and then performed Bayesian inference using the prior to denoise the mechanism's output. They demonstrated that ALIBI improves upon the work of 
\cite{ghazi2021deep} in high-privacy regimes. In \cite{esfandiari2022label}, the authors presented new mechanisms for label differential privacy via clustering. Their approach involved clustering training examples based on non-private feature vectors, randomly resampling labels from examples within the same cluster, and producing a training set with noisy labels alongside a modified loss function. They demonstrated that if the clusters were large and of high quality, the model minimizing the modified loss converged to a small excess risk at a rate similar to non-private learning.

For regression tasks, one popular approach involves the additive noise mechanisms \cite{balle2018improving, fernandes2021laplace, geng2014optimal}. A recent method is the RR-on-Bins mechanism \cite{ghazi2022regression}, which discretized the output space into finite bins and applied the RR algorithm to these bins. The finite bins were determined using a dynamic programming approach. Another notable method is the optimal unbiased randomizers for regression with label differential privacy \cite{Badanidiyuru2023optimal}, which built upon the RR-on-Bins mechanism by introducing an unbiased constraint to enhance its performance.

\section{Label DP for Regression with Prior Known}
\label{sec:prior_known}
In this section, our focus is on regression with label differential privacy. We consider a dataset for regression denoted as $(x, y) \in (\mathcal{X},\mathcal{Y})$, drawn from an unknown distribution. The regression task is to learn a predictor $f_\theta: \mathcal{X} \rightarrow \mathcal{Y}$ by minimizing a loss function $\mathcal{L}(f_\theta(x), y)$. The loss function, denoted as $\mathcal{L}(\cdot, \cdot)$, is typically used for regression tasks, such as mean squared error. Our goal is to design a randomized method that maps original responses to randomized responses by maximizing the probability of preserving neighborhood relationships, while ensuring $\epsilon$-label differential privacy.

In \cite{ghazi2022regression} and \cite{Badanidiyuru2023optimal}, the authors assumed that the randomized outputs were sampled from a finite set. In contrast, we consider both the original and randomized response variables as continuous random variables. We denote the original response variable as $Y$ and the corresponding randomized output variable as $\tilde{Y}$.   

Let us consider the concept of $\epsilon$-label differential privacy. We have the following proposition:

\begin{proposition}\label{prop:labeldp}
Let $\mathcal{A}:\mathcal{Y}\to\tilde{\mathcal{Y}}$ be a randomized algorithm and $f_{\tilde{Y}|Y}(\tilde{y}, y)$ represent the conditional probability density function conditioned on $Y=y$. $\mathcal{A}$ satisfying $\epsilon$-label DP is equivalent to the following inequality:
\begin{equation}
\label{eq:inequality_prob}
\int^{\tilde{y}}_{y_0}f_{\tilde{Y}|Y}(\tilde{y},y)d\tilde{y}\le e^\epsilon \int^{\tilde{y}}_{y_0}f_{\tilde{Y}|Y}(\tilde{y},y')d\tilde{y},\ \forall y,y'\in\mathcal{Y},\forall (y_0,\tilde{y})\subseteq\tilde{\mathcal{Y}}.
\end{equation}
When taking the derivative of both sides of \eqref{eq:inequality_prob} with respect to $\tilde{y}$, we obtain $f_{\tilde{Y}|Y}(\tilde{y},y) \leq e^\epsilon f_{\tilde{Y}|Y}(\tilde{y},y')$ holds for all $y,y'\in\mathcal{Y}$ and $ \tilde{y}\in\tilde{\mathcal{Y}}$ almost everywhere.
\end{proposition}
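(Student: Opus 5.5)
The plan is to unpack the definition of $\epsilon$-label DP from Appendix \ref{sec:preliminary} in the setting where the mechanism acts only on the label. For a label-only mechanism $\mathcal{A}:\mathcal{Y}\to\tilde{\mathcal{Y}}$, label DP says that for every pair of labels $y,y'$ and every measurable set $S\subseteq\tilde{\mathcal{Y}}$,
\begin{equation*}
\Pr[\mathcal{A}(y)\in S]\le e^\epsilon\Pr[\mathcal{A}(y')\in S].
\end{equation*}
Datasets differing in one label reduce to this single-coordinate statement, since the other coordinates are handled identically. Because $\tilde{Y}\mid Y=y$ has density $f_{\tilde{Y}|Y}(\cdot,y)$, we have $\Pr[\mathcal{A}(y)\in S]=\int_S f_{\tilde{Y}|Y}(t,y)\,dt$.

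For the forward direction, we take $S$ to be an interval $(y_0,\tilde{y})\subseteq\tilde{\mathcal{Y}}$. This gives \eqref{eq:inequality_prob} immediately.

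For the converse, assume \eqref{eq:inequality_prob} holds for all intervals. Fix $y,y'$ and define the two finite measures
\begin{equation*}
\mu(S)=\int_S f_{\tilde{Y}|Y}(t,y)\,dt,\qquad \nu(S)=e^\epsilon\int_S f_{\tilde{Y}|Y}(t,y')\,dt.
\end{equation*}
The inequality $\mu\le\nu$ holds on all intervals, and by additivity on finite disjoint unions of intervals, which form an algebra generating the Borel $\sigma$-algebra. We then extend $\mu\le\nu$ to all Borel sets $S$. One route is the monotone class theorem: the class of sets where $\mu(S)\le\nu(S)$ is closed under increasing and decreasing limits by continuity of finite measures. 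An alternative is outer regularity, writing any open set as a countable disjoint union of intervals and approximating general Borel sets from outside by open sets. This yields full $\epsilon$-label DP.

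For the derivative statement, set $g(t)=e^\epsilon f_{\tilde{Y}|Y}(t,y')-f_{\tilde{Y}|Y}(t,y)$, which is integrable. Inequality \eqref{eq:inequality_prob} says $\int_{y_0}^{\tilde{y}}g\ge0$ for all intervals. By the Lebesgue differentiation theorem, $\frac{d}{d\tilde{y}}\int_{y_0}^{\tilde{y}}g=g(\tilde{y})$ for almost every $\tilde{y}$, and each such value is a limit of averages $\frac1h\int_{\tilde{y}}^{\tilde{y}+h}g\ge0$. Hence $g\ge0$ a.e., i.e.\ $f_{\tilde{Y}|Y}(\tilde{y},y)\le e^\epsilon f_{\tilde{Y}|Y}(\tilde{y},y')$ almost everywhere in $\tilde{y}$. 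Conversely, the a.e.\ pointwise inequality integrates back to \eqref{eq:inequality_prob}, so all three formulations are equivalent.

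The main obstacle is the measure-theoretic passage from intervals to arbitrary Borel sets, together with the "almost everywhere" qualifier: densities are only defined up to null sets, so the pointwise inequality can only hold a.e. I would handle the first with the $\pi$–$\lambda$ (monotone class) argument above. I would also note explicitly that for each fixed pair $(y,y')$ the exceptional null set may depend on the pair, which is the natural reading of "almost everywhere" in the statement.
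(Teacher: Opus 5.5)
Your proposal is correct and follows the route the paper indicates, which is to restrict the output sets to intervals and then pass to densities by differentiating. It is more complete than the paper's sketch in two places. The paper never justifies the converse from intervals to arbitrary Borel sets. Its ``differentiate both sides of an inequality'' step is valid only because the inequality holds on \emph{every} interval, and that is exactly what your Lebesgue-differentiation argument over $[\tilde y,\tilde y+h]$ uses.

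One small correction: $\mu(S)\le\nu(S)$ does not imply $\mu(S^c)\le\nu(S^c)$, so the class $\{S:\mu(S)\le\nu(S)\}$ is a monotone class but not a $\lambda$-system. The extension therefore follows from the monotone class theorem over the algebra of finite disjoint unions of intervals, or from your outer-regularity argument, or simply from integrating the a.e.\ inequality $g\ge 0$. It does not follow from the $\pi$--$\lambda$ theorem.
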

\begin{remark}
    In the following, we characterize $\epsilon$-label DP as follows: 
    \begin{equation}
\label{eq:inequality_density}
f_{\tilde{Y}|Y}(\tilde{y},y) \leq e^\epsilon f_{\tilde{Y}|Y}(\tilde{y},y'),\  \forall y,y'\in\mathcal{Y},\forall \tilde{y}\in\tilde{\mathcal{Y}}.
\end{equation}
\end{remark}
Next, we formulate an optimization model to address regression problems under $\epsilon$-label differential privacy with a known prior, as discussed in Subsection \ref{subsec:model}, using a fixed set $\I$. An optimal set $\I$ is determined in Subsection \ref{subsec:optimal_interval}. Once the optimal set is established, a randomized algorithm is introduced, and the $\epsilon$-label DP property of the proposed algorithm is ensured in Subsection \ref{subsec:algorithm}.
\subsection{Model Formulation}
\label{subsec:model}
In this subsection, we assume we know the distribution of original responses of regression and the corresponding probability density function is $f_Y(y)$. Let $f_{\tilde{Y}|Y}(\tilde{y}, y)$ represent the conditional probability density function conditioned on $Y=y$ and $\NN_y=[y-\zeta,y+\zeta]$ be a $\zeta$-neighborhood of $y$, where $\zeta$ is a positive number. 

Our method is motivated by the empirical distribution of the responses. Since most responses are concentrated within a finite interval, with only a few extreme outliers, we seek an optimal interval $\mathcal{I}$. This interval maximizes the probability that a response $y$ is mapped to a randomized value $\tilde{y}$ within its neighborhood $\mathcal{N}_y$, for all $y \in \mathcal{I}$, while still satisfying $\epsilon$-label differential privacy. 

In this paper, we consider a subset $\I=[A_1,A_2]\subseteq (-\infty,+\infty)$ (determine in Subsection \ref{subsec:optimal_interval}) and assume that $f_{\tilde Y|Y}(\tilde y,y)$ is a positive constant for any $ y\in\I$ and $\tilde y\in \NN_y$. Define $\NN_{\I}=[A_1-\zeta,A_2+\zeta]$, our goal is to find a randomized algorithm mapping the original responses, whether in $\I$ or not, to the set $\NN_{\I}$. This algorithm must ensure the $\epsilon$-label DP property while maximizing the probability of transitioning from $y$ to $\tilde y\in \NN_y$ for all $y\in\I$. Then the optimization
model is 
\begin{equation}
\label{eq:model}
\begin{split}
\max & \int_\I\ f_Y(y)\left[\int_{\NN_y}f_{\tilde Y|Y}(\tilde y,y)d\tilde y\right]dy\\
\hbox{s.t.}&\left\{
\begin{array}{l}
\int^{+\infty}_{-\infty}f_{\tilde Y|Y}(\tilde y,y)d\tilde y=1,\forall y\in\I,\\
 f_{\tilde{Y}|Y}(\tilde{y},y) \leq e^\epsilon f_{\tilde{Y}|Y}(\tilde{y},y'), \forall y, y'\in\I,\forall\tilde y\in\NN_{\I},\\
f_{\tilde Y|Y}(\tilde y,y) =v, \forall y\in\I,\forall\tilde y\in \NN_y,\\
f_{\tilde Y|Y}(\tilde y,y)\ge 0,\forall\tilde y,y,
\end{array}
\right.
\end{split}
\end{equation}
where the first and fourth constraints in \eqref{eq:model} are from the properties of conditional probability density function, the second constraint in \eqref{eq:model} is from \eqref{eq:inequality_density} and the third constraint is from the assumption $f_{\tilde Y|Y}(\tilde y,y), \forall y\in\I,\forall\tilde y\in \NN_y$ is a positive constant, denoted as $v$. 

The following lemma shows the maximizer and the maximal value of the objective function in \eqref{eq:model}.
\begin{lemma}
\label{lem:maximizer}
Let $\I=[A_1,A_2]$, $\zeta$ be a positive constant and 
$\epsilon>0$ be a privacy budget. Define $\gamma=2\zeta+e^{-\epsilon}(A_2-A_1)$, $\NN_y=[y-\zeta,y+\zeta],\forall y\in\I$ and $\NN_\I=\cup_{y\in\I}\NN_y$. For any $y\in\I$, the maximizer of \eqref{eq:model} is 
\begin{equation}
\label{eq:cond_prob}
f_{\tilde Y|Y}(\tilde y,y)=\left\{
\begin{array}{ll}
1/\gamma,&\hbox{ if }\tilde y\in \NN_y,\\
e^{-\epsilon}/\gamma,&\hbox{ if }\tilde y\in\NN_\I/ \NN_y,\\
0,&\hbox{otherwise},\\
\end{array}
\right.
\end{equation} 
and the maximal value of the objective function is
\begin{equation*}
\label{eq:model2}
\int_\I\ f_Y(y)\left[\int_{\NN_y}f_{\tilde Y|Y}(\tilde y,y)d\tilde y\right]dy
\le2\frac\zeta\gamma\int_\I f_Y(y)dy=2\frac\zeta\gamma\int^{A_2}_{A_1}f_Y(y)dy
\triangleq F(A_1,A_2).
\end{equation*}
\end{lemma}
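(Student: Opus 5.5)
The plan is to reduce \eqref{eq:model} to a one-parameter problem in the constant $v$. By the third constraint, the inner integral equals $\int_{\NN_y} v\,d\tilde y = 2\zeta v$ for every $y\in\I$. The objective is therefore $2\zeta v\int_{A_1}^{A_2} f_Y(y)dy$, which is increasing in $v$. So it suffices to find the largest $v$ compatible with the remaining constraints, and then to show that \eqref{eq:cond_prob} attains it.

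\textbf{Upper bound on $v$.} Fix $y\in\I$ and take any $\tilde y\in\NN_\I\setminus\NN_y$. Since $\NN_\I=\cup_{y'\in\I}\NN_{y'}$, there is some $y'\in\I$ with $\tilde y\in\NN_{y'}$, so $f_{\tilde Y|Y}(\tilde y,y')=v$. The second constraint, with the roles of $y$ and $y'$ exchanged, gives $v\le e^\epsilon f_{\tilde Y|Y}(\tilde y,y)$, i.e. $f_{\tilde Y|Y}(\tilde y,y)\ge e^{-\epsilon}v$ on $\NN_\I\setminus\NN_y$. Using nonnegativity off $\NN_\I$ and the normalization constraint,
\begin{equation*}
1\ge \int_{\NN_y} v\,d\tilde y+\int_{\NN_\I\setminus\NN_y}e^{-\epsilon}v\,d\tilde y = v\bigl(2\zeta+e^{-\epsilon}|\NN_\I\setminus\NN_y|\bigr).
\end{equation*}
Because $\NN_\I=[A_1-\zeta,A_2+\zeta]$ is an interval of length $A_2-A_1+2\zeta$, we have $|\NN_\I\setminus\NN_y|=A_2-A_1$, independent of $y$. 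Hence $v\le 1/\gamma$ with $\gamma=2\zeta+e^{-\epsilon}(A_2-A_1)$. Plugging this into the objective gives the stated bound $F(A_1,A_2)=\frac{2\zeta}{\gamma}\int_{A_1}^{A_2}f_Y$.

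\textbf{Attainment.} I would check that \eqref{eq:cond_prob} is feasible.
\begin{itemize}
\item It is nonnegative.
\item It integrates to $2\zeta/\gamma+e^{-\epsilon}(A_2-A_1)/\gamma=1$.
\item It equals $v=1/\gamma$ on $\NN_y$.
\item On $\NN_\I$ its values for any two $y,y'\in\I$ lie in $\{1/\gamma,e^{-\epsilon}/\gamma\}$, whose ratio is at most $e^\epsilon$, so the second constraint holds.
\end{itemize}
Thus the bound is achieved, and \eqref{eq:cond_prob} is a maximizer.

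\textbf{Main obstacle.} The delicate step is the lower bound $f_{\tilde Y|Y}(\tilde y,y)\ge e^{-\epsilon}v$ on all of $\NN_\I\setminus\NN_y$. It relies on every $\tilde y\in\NN_\I$ belonging to some $\NN_{y'}$ with $y'\in\I$, which holds because $\I$ is an interval. The normalization in the inequality above must also be read as using only the mass on $\NN_\I$. One caveat is uniqueness: any mass placed outside $\NN_\I$ forces $v<1/\gamma$, so the maximizer is unique up to null sets.
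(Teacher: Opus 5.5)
Your proof is correct and follows the paper's route: reduce to $v$, lower-bound $f_{\tilde Y|Y}(\tilde y,y)\ge e^{-\epsilon}v$ on $\NN_\I\setminus\NN_y$ via the privacy constraint, and integrate against the normalization to get $v\le 1/\gamma$. The one difference there is that the paper finds the witness by tiling $\NN_\I$ with the intervals $\NN_{y+2k\zeta}$ and checking case by case that $\tilde y\in\NN_{P_\I(y+2k\zeta)}$, whereas you take $y'$ directly from $\NN_\I=\cup_{y'\in\I}\NN_{y'}$, which is cleaner.

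In the second half the paper argues necessity rather than your feasibility check (which it leaves implicit), squeezing $\int_{\NN_\I\setminus\NN_{y}}f_{\tilde Y|Y}$ between $e^{-\epsilon}(A_2-A_1)/\gamma$ and $1-2\zeta/\gamma$ to force $f_{\tilde Y|Y}=e^{-\epsilon}/\gamma$ a.e.\ there and $0$ off $\NN_\I$. This equality case of your own chain is what your uniqueness caveat needs, since ruling out mass outside $\NN_\I$ does not by itself pin down the values on $\NN_\I\setminus\NN_y$.
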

\begin{proof}
    We defer the proof to Appendix \ref{appendix:proof_maximizer}.
\end{proof}
It is important to note that the optimization model \eqref{eq:model} can only provide $f_{\tilde Y|Y}(\tilde y,y)$ for all $y\in\I$. For $y\in(-\infty,+\infty)\setminus\I$, an $\epsilon$-label DP property is also required. We summarize  $f_{\tilde Y|Y}(\tilde y,y)$ for all $y$ in Remark \ref{remark:express_labeldp}.
\begin{remark}\label{remark:express_labeldp}
In this paper, our goal is to find a randomized algorithm mapping original responses, whether in $\I$ or not, to the set $\NN_{\I}$. To guarantee the $\epsilon$-label DP property and combine \eqref{eq:cond_prob}, the conditional probability density function $f_{\tilde Y|Y}(\tilde y,y),\forall y\notin \I$ requires to satisfy
\begin{equation}
\label{eq:cond_prob_notinI}
\max_{\tilde y\in\NN_\I}(f_{\tilde Y|Y}(\tilde y,y))\le\frac1\gamma,\ \min_{\tilde y\in\NN_\I}(f_{\tilde Y|Y}(\tilde y,y))\ge\frac{e^{-\epsilon}}\gamma, \ f_{\tilde Y|Y}(\tilde y,y)=0,\forall \tilde y\notin\NN_\I,\ \int_{\tilde y\in\I}f_{\tilde Y|Y}(\tilde y|y)=1.
\end{equation}
One possible choice of $f_{\tilde Y|Y}(\tilde y,y),\forall y$ is
\begin{align}\label{eq:choice1}
f_{\tilde Y|Y}(\tilde y,y)=\left\{
\begin{array}{ll}
1/\gamma,&\hbox{ if }\tilde y\in \NN_{P_{\I}(y)},\\
e^{-\epsilon}/\gamma,&\hbox{ if }\tilde y\in\NN_\I/ \NN_{P_{\I}(y)},\\
0,&\hbox{otherwise}.\\
\end{array}
\right.\ P_{\I}(y)=\left\{
\begin{array}{ll}
A_2, & y>A_2,\\
y, & y\in\I,\\
A_1, & y<A_1.
\end{array}
\right.
\end{align}
Note that $\epsilon\ge0$, then $e^{-\epsilon}\le1$ and 
$\gamma=2\zeta+e^{-\epsilon}(A_2-A_1)\le2\zeta+(A_2-A_1),\ e^{\epsilon}\gamma=2\zeta e^{\epsilon}+(A_2-A_1)\ge2\zeta+(A_2-A_1)$. 
Therefore, $e^{-\epsilon}/\gamma=1/(e^\epsilon\gamma)\le1/(2\zeta+(A_2-A_1))\le1/\gamma$. 
Hence
\begin{align}\label{eq:choice2}
f_{\tilde Y|Y}(\tilde y,y)=\left\{
\begin{array}{ll}
1/\gamma,&\hbox{ if }y\in \I,\tilde y\in \NN_y,\\
e^{-\epsilon}/\gamma,&\hbox{ if }y\in\I,\tilde y\in\NN_\I/ \NN_y,\\
\frac{1}{2\zeta+(A_2-A_1)}, & \hbox{ if }y\notin\I,\tilde y\in\NN_\I,\\
0, & \hbox{otherwise}.
\end{array}
\right.
\end{align}
\end{remark}
Equations \eqref{eq:choice1} and \eqref{eq:choice2} present two  options for $f_{\tilde Y|Y}(\tilde y,y)$ as discussed in Remark \ref{remark:express_labeldp}. Other choices are acceptable as long as the condition \eqref{eq:cond_prob_notinI} is satisfied.

%##################################################################################################
\subsection{The Optimal Interval}\label{subsec:optimal_interval}
For a given interval $[A_1, A_2]$, Lemma \ref{lem:maximizer} gives the maximum achievable value $F(A_1, A_2)$ of the optimization problem \eqref{eq:model}. Our goal is to find the optimal interval by maximizing $F(A_1, A_2)$ over all possible $A_1$ and $A_2$. Next, we estimate an optimal choice for the interval $\I=[A_1,A_2]$ based on an available prior. %, such that the objective function $F(A_1, A_2)$ achieves the maximal value.  

%##################################################################################################

In practice, the commonly used technique for estimating a prior is a histogram, which corresponds to a piecewise constant function. Here we assume that $f_Y(y)$ is a step function with nodes $n_0 < n_1 < \cdots < n_k$ and analyze the scenarios in which $A_1$ and $A_2$ fall into these $k+2$ intervals with $A_1 \leq A_2$, respectively.

The following lemma aims to identify the maximizers $(A_1, A_2)$ of the objective function $F(A_1, A_2)$. 
\begin{lemma}
\label{lem:subinterval}
Suppose the support of $f_Y(y)$ consists of $k$ intervals with endpoints $n_0<n_1<\cdots<n_k$ and 
\begin{equation*}
f_Y(y)=\left\{
\begin{array}{ll}
\alpha_i,&y\in[n_i,n_{i+1}),i=0,\cdots,k-2,\\
\alpha_{k-1},&y\in[n_{k-1},n_k],\\
0,&\hbox{otherwise.}
\end{array}
\right.
\end{equation*}
Let $\zeta$ be a positive constant, $\gamma=2\zeta+e^{-\epsilon}(A_2-A_1)$ and $F(A_1,A_2)=\frac{2\zeta}{\gamma}\int^{A_2}_{A_1}f_Y(y)dy$, where $F(A_1,A_2)$ is the maximum achievable value of the optimization problem \eqref{eq:model} (derived in Lemma \ref{lem:maximizer}). When $A_1\in(-\infty,n_0)$, $F(A_1,A_2)$ is increasing about $A_1$. When $A_2\in(n_k,\infty)$, $F(A_1,A_2)$ is decreasing about $A_2$. When $A_1\in[n_i,n_{i+1}]$ and $A_2\in[n_j,n_{j+1}]$ with $i\le j$, $$\nabla F_{A_1}(A_1,A_2)=\frac{-2\zeta}{\gamma^2}(e^{-\epsilon}(\alpha_i-\alpha_j)A_2+c_1),\nabla F_{A_2}(A_1,A_2)=\frac{2\zeta}{\gamma^2}(e^{-\epsilon}(\alpha_i-\alpha_j)A_1+c_2),$$ and the possible critical points are $A_1=e_1=\frac{c_2}{e^{-\epsilon}(\alpha_j-\alpha_i)}$ and $A_2=e_2=\frac{c_1}{e^{-\epsilon}(\alpha_j-\alpha_i)}$, 
where
$h=\alpha_in_{i+1}+\sum^{j-1}_{\ell=i+1}\alpha_\ell(n_{\ell+1}-n_\ell)-\alpha_jn_j$, $c_1=2\zeta\alpha_i-e^{-\epsilon}h$, $c_2=2\zeta\alpha_j-e^{-\epsilon}h$.
\end{lemma}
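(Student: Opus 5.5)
The plan is to write $F$ as a quotient and differentiate it directly. Set $G(A_1,A_2)=\int_{A_1}^{A_2}f_Y(y)\,dy$, so that $F=2\zeta G/\gamma$ with $\gamma=2\zeta+e^{-\epsilon}(A_2-A_1)>0$. Since $f_Y$ is piecewise constant, $G$ is continuous and piecewise linear in each variable. Wherever $A_1,A_2$ are not nodes, the fundamental theorem of calculus gives $\partial G/\partial A_1=-f_Y(A_1)$ and $\partial G/\partial A_2=f_Y(A_2)$. Also $\partial\gamma/\partial A_1=-e^{-\epsilon}$ and $\partial\gamma/\partial A_2=e^{-\epsilon}$. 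The quotient rule then yields the two master formulas
\begin{equation*}
\frac{\partial F}{\partial A_1}=\frac{2\zeta}{\gamma^2}\bigl(-f_Y(A_1)\gamma+e^{-\epsilon}G\bigr),\qquad
\frac{\partial F}{\partial A_2}=\frac{2\zeta}{\gamma^2}\bigl(f_Y(A_2)\gamma-e^{-\epsilon}G\bigr).
\end{equation*}
At the nodes $n_\ell$ these hold as one-sided derivatives, using the value of $\alpha$ on the relevant side.

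\textbf{Boundary cases.} If $A_1<n_0$, then $f_Y(A_1)=0$, so $\partial F/\partial A_1=2\zeta e^{-\epsilon}G/\gamma^2\ge 0$ because $G\ge 0$. Hence $F$ is nondecreasing in $A_1$ there. Symmetrically, if $A_2>n_k$, then $f_Y(A_2)=0$ and $\partial F/\partial A_2=-2\zeta e^{-\epsilon}G/\gamma^2\le 0$, so $F$ is nonincreasing in $A_2$. Both are immediate from the master formulas.

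\textbf{Interior case.} Take $A_1\in[n_i,n_{i+1}]$ and $A_2\in[n_j,n_{j+1}]$ with $i<j$. Splitting the integral over the pieces gives
\begin{equation*}
G=\alpha_i(n_{i+1}-A_1)+\sum_{\ell=i+1}^{j-1}\alpha_\ell(n_{\ell+1}-n_\ell)+\alpha_j(A_2-n_j)=h-\alpha_iA_1+\alpha_jA_2,
\end{equation*}
with $h$ exactly as in the statement. Substituting $f_Y(A_1)=\alpha_i$, $f_Y(A_2)=\alpha_j$ and this expression for $G$ into the master formulas, the $A_1$-terms cancel in the first bracket:
\begin{equation*}
-\alpha_i\gamma+e^{-\epsilon}G=-\bigl(2\zeta\alpha_i-e^{-\epsilon}h\bigr)-e^{-\epsilon}(\alpha_i-\alpha_j)A_2=-\bigl(c_1+e^{-\epsilon}(\alpha_i-\alpha_j)A_2\bigr).
\end{equation*}
Likewise the $A_2$-terms cancel in the second bracket:
\begin{equation*}
\alpha_j\gamma-e^{-\epsilon}G=c_2+e^{-\epsilon}(\alpha_i-\alpha_j)A_1.
\end{equation*}
These are the claimed gradient formulas. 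Setting each bracket to zero and solving the resulting linear equations (assuming $\alpha_i\ne\alpha_j$) gives $A_2=c_1/(e^{-\epsilon}(\alpha_j-\alpha_i))=e_2$ and $A_1=c_2/(e^{-\epsilon}(\alpha_j-\alpha_i))=e_1$. These are the candidate critical points; they are relevant only when they lie in the respective subintervals, and otherwise the maximum sits on a subinterval endpoint.

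\textbf{Main obstacle.} The computation itself is routine; the care is needed in the edge cases. When $i=j$, $G=\alpha_i(A_2-A_1)$, and the formula for $h$ must be read with the convention $h=0$ (equivalently, the empty-sum convention with the boundary terms merged). Then $\alpha_i=\alpha_j$, there is no isolated critical point, and the gradient reduces to the signs of $c_1,c_2$. I would also check that at the nodes the one-sided derivatives from adjacent cells agree with the formulas, since $G$ is continuous. This justifies stating the formulas on the closed intervals $[n_i,n_{i+1}]$, $[n_j,n_{j+1}]$.
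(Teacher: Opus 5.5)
Your proposal is correct and follows essentially the same route as the paper's proof. You differentiate $F=2\zeta G/\gamma$ via the fundamental theorem of calculus and the quotient rule, and use $f_Y=0$ outside $[n_0,n_k]$ for the two boundary monotonicity claims. You then write $\int_{A_1}^{A_2}f_Y=h-\alpha_iA_1+\alpha_jA_2$ on a cell, which gives the linear brackets and the critical values $e_1,e_2$.

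Your extra care is warranted. For $i=j$ the paper's decomposition (and the literal formula for $h$) double-counts $\alpha_i(n_{i+1}-n_i)$, so the convention $h=0$ is genuinely needed there. Also, the boundary monotonicity is only non-strict when $A_2\le n_0$ (resp.\ $A_1\ge n_k$).
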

\begin{proof}
   The proof defers to Appendix \ref{appendix:proof_subinterval}. 
\end{proof}\vspace{-3mm}
\begin{algorithm}[H]

	\begin{algorithmic}[1]
		\Require{A response $n_0< n_1<\cdots<n_k$, $\alpha_0,\cdots,\alpha_{k-1}$, $f=0$.
		}
            % \State Set .
   \For{$i=0$ {\bfseries to} $k-1$}
	\For{$j=i$ {\bfseries to} $k-1$}
		\State $l_1=n_i$, $l_2=n_{i+1}$, $m_1=n_j$ and $m_2=n_{j+1}$,
		\If{$f<\max_{p,q=1,2}(F(l_p,m_q))$}\Comment{Check  $A_1=n_i$ or $n_{i+1}$, $A_2=n_j$ or $n_{j+1}$.}
			\State $f =\max_{p,q=1,2}(F(l_p,m_q))$,
			 $[A_1,A_2] =\mathop{\arg\max}_{p,q=1,2}(F(l_p,m_q)).$
		\EndIf
		\State Compute possible critical points $A_1=e_1$, $A_2=e_2$.
		\If{$n_j<e_2<n_{j+1}$}\Comment{if $e_2 \in [n_j, n_{j+1}]$}
			\If{$f<\max(F(n_i,e_2), F(n_{i+1},e_2))$}\Comment{check $A_1 = n_i$ or $n_{i+1}$, $A_2=e_2$}
				\State $f = \max(F(n_i,e_2), F(n_{i+1},e_2))$,
				\State $[A_1,A_2] =\mathop{\arg\max}(F(n_i,e_2), F(n_{i+1},e_2)).$
			\EndIf
		\EndIf
		\If{$n_i<e_1<n_{i+1}$}\Comment{if $e_1 \in [n_i, n_{i+1}]$}
			\If{$f<\max(F(e_1,n_j), F(e_1,n_{j+1}))$}\Comment{check  $A_1=e_1,A_2 = n_j$ or $n_{j+1}$}
				\State $f = \max(F(e_1,n_j), F(e_1,n_{j+1}))$,
				\State $[A_1,A_2] =\mathop{\arg\max}(F(e_1,n_j), F(e_1,n_{j+1})).$
			\EndIf
		\EndIf
		\If{$n_j<e_2<n_{j+1}$ {\bfseries and} $n_i<e_1<n_{i+1}$}\Comment{if $e_2 \in [n_j, n_{j+1}]$ and $e_1 \in [n_i, n_{i+1}]$, }
			\If{$f<F(e_1,e_2)$}\Comment{check $A_1=e_1$, $A_2=e_2$}
				\State $A_1=e_1$, $A_2=e_2.$
			\EndIf
		\EndIf	
 	 \EndFor
   \EndFor\\
\Return $A_1$ and $A_2$.
	\end{algorithmic}
	\caption{Compute Optimal Interval}
 \label{alg:optimal_interval}
\end{algorithm}

From Lemma \ref{lem:subinterval}, we find $F(A_1,A_2)$ increases when $A_1\in(-\infty,n_0)$ and $F(A_1,A_2)$ decreases when $A_2\in(n_k,+\infty)$, so the maximizer must be in the interval $[n_0,n_k]$. We turn to enumerate the $k$ intervals in $[n_0,n_k]$. When $A_1\in[n_i,n_{i+1}]$ and $A_2\in[n_j,n_{j+1}]$ with $i\le j$, the possible critical points are $A_1=e_1$ and $A_2=e_2$.

To determine the maximizer, it is sufficient to evaluate the endpoints and points where the partial derivatives are zero. It is important to note that we will only check the value of $e_1$ if it falls within the interval $[n_i, n_{i+1}]$. Similarly, we will only check the value of $e_2$ if it falls within the interval $[n_j, n_{j+1}]$. The algorithm to compute optimal interval is described in Algorithm \ref{alg:optimal_interval}.

In Algorithm \ref{alg:optimal_interval}, the memory complexity is $O(k)$ because we only need to store a constant number of variables, including $n_0,\cdots,n_k$, $\alpha_0,\cdots,\alpha_{k-1}$, $A_1$, $A_2$, $f$, $e_1$ and $e_2$, where $k$ represents the number of intervals with piecewise constants in the support. The time complexity of Algorithm \ref{alg:optimal_interval} is at most $O(k^2)$. In each iteration, we need to evaluate the function $F$, which requires $O(1)$ operations. We iterate over $i=0,\cdots,k-1$ and $j=i,\cdots,k-1$. Generally, 
$k$ is not large, which can be controlled by users. Therefore, the algorithm for computing the optimal interval is efficient.
\subsection{Response Privacy with Prior}\label{subsec:algorithm}
Once the optimal interval $\I$ is chosen, the randomized output can be assigned to a specific point within $\NN_{\I}$ according to the conditional probability density function \eqref{eq:cond_prob}, if the input response falls within this interval $\I$. Otherwise, we sample according to the conditional probability density function $f_{\tilde Y,Y}(\tilde y,y)$ satisfying \eqref{eq:cond_prob_notinI}. 
The whole procedure is presented in Algorithm \ref{alg:prior_known}, named as \emph{RPWithPrior}, to sample the randomized response $\tilde y$.
%##############################################################################################
\begin{algorithm}[ht]
	\begin{algorithmic}[1]
		\Require{A response $y$, a positive value $\zeta$, privacy budget $\epsilon$.
		}
   \State Compute the optimal $A_1, A_2$ by Algorithm \ref{alg:optimal_interval}.
   \If {$y\in[A_1,A_2]$}
	\State Sample $\tilde y$ from the distribution with \eqref{eq:cond_prob},
   \Else
     \State Sample $\tilde y$ according to $f_{\tilde Y|Y}(\tilde y,y)$ satisfying \eqref{eq:cond_prob_notinI}.
  \EndIf\\
\Return    A randomized response $\tilde y$.
\end{algorithmic}
\caption{Response Privacy with Prior (RPWithPrior$_\epsilon$)}
   \label{alg:prior_known}
\end{algorithm}
%##############################################################################################

Finally, we will show {\it RPWithPrior$_\epsilon$ is $\epsilon$-label DP}.%, where the proof defers to Appendix \ref{appendix:theo_rpwithprior}.
\begin{theorem}
\label{theo:rpwithprior}
RPWithPrior is $\epsilon$-label DP.
\end{theorem}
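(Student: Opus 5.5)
By Proposition \ref{prop:labeldp} and the remark after it, it suffices to prove the pointwise density inequality \eqref{eq:inequality_density}: for every pair of labels $y,y'$ and every $\tilde y$,
\[
f_{\tilde Y|Y}(\tilde y,y)\le e^\epsilon f_{\tilde Y|Y}(\tilde y,y').
\]
The interval $\I=[A_1,A_2]$ produced by Algorithm \ref{alg:optimal_interval} is a fixed, data-independent quantity here, since the prior is known. So the mechanism is a fixed family of conditional densities, and I only need to bound ratios of these densities.

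\textbf{Common range and density bounds.} First I record the structure of the output distribution for an arbitrary $y$. When $y\in\I$, \eqref{eq:cond_prob} gives two facts:
\begin{itemize}
\item the density vanishes outside $\NN_\I$;
\item on $\NN_\I$ it takes only the values $1/\gamma$ and $e^{-\epsilon}/\gamma$.
\end{itemize}
It is a valid density because its integral is
\[
2\zeta\cdot\frac1\gamma+\bigl((A_2-A_1+2\zeta)-2\zeta\bigr)\frac{e^{-\epsilon}}{\gamma}=\frac{\gamma}{\gamma}=1 .
\]
When $y\notin\I$, the algorithm samples from a density satisfying \eqref{eq:cond_prob_notinI}. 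This again vanishes off $\NN_\I$ and takes values in $[e^{-\epsilon}/\gamma,\,1/\gamma]$ on $\NN_\I$. The two concrete choices \eqref{eq:choice1} and \eqref{eq:choice2} satisfy this, and for \eqref{eq:choice2} this uses the chain $e^{-\epsilon}/\gamma\le 1/(2\zeta+A_2-A_1)\le 1/\gamma$ shown in Remark \ref{remark:express_labeldp}.

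Hence every $y\in\R$ satisfies
\[
\frac{e^{-\epsilon}}{\gamma}\le f_{\tilde Y|Y}(\tilde y,y)\le\frac1\gamma \quad (\tilde y\in\NN_\I),
\qquad
f_{\tilde Y|Y}(\tilde y,y)=0 \quad (\tilde y\notin\NN_\I).
\]

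\textbf{The ratio bound.} Fix $y,y'$. If $\tilde y\notin\NN_\I$, both sides of the inequality are $0$ and it holds trivially. If $\tilde y\in\NN_\I$, then
\[
f_{\tilde Y|Y}(\tilde y,y)\le \frac1\gamma=e^\epsilon\cdot\frac{e^{-\epsilon}}{\gamma}\le e^\epsilon f_{\tilde Y|Y}(\tilde y,y').
\]
This settles all four cases ($y,y'$ each inside or outside $\I$) at once, and it covers the second constraint of \eqref{eq:model}. Finally, integrating over any measurable set $S\subseteq\tilde{\mathcal Y}$, or over intervals as in \eqref{eq:inequality_prob}, gives $\Pr[\A(y)\in S]\le e^\epsilon\Pr[\A(y')\in S]$. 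This is the label DP definition.

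\textbf{Main obstacle.} The delicate point is the case $y\notin\I$. The statement in \eqref{eq:cond_prob_notinI} is somewhat loose: its normalization is written over $\I$ rather than $\NN_\I$. I will read it as requiring total mass $1$ on $\NN_\I$ together with the two-sided pointwise bounds, and explicitly check that \eqref{eq:choice1} and \eqref{eq:choice2} meet it. It is also worth noting that the boundary overlaps of $\NN_y$ are measure zero, so the "almost everywhere" qualifier in Proposition \ref{prop:labeldp} suffices.
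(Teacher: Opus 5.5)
Your proposal is correct and follows essentially the same route as the paper's proof. Both arguments rest on two facts: every conditional density vanishes off $\NN_\I$, and on $\NN_\I$ it is sandwiched between $e^{-\epsilon}/\gamma$ and $1/\gamma$, so the density ratio for any two labels is at most $e^{\epsilon}$. Your extra checks (the normalization of \eqref{eq:cond_prob}, reading the normalization in \eqref{eq:cond_prob_notinI} over $\NN_\I$ rather than $\I$, and integrating back to sets) make explicit what the paper's short proof leaves implicit.
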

\begin{proof}
    Note that $f_{\tilde Y|Y}(\tilde y,y)$ is represented by \eqref{eq:cond_prob} when $y\in\I$, otherwise $f_{\tilde Y|Y}(\tilde y,y)$ satisfies \eqref{eq:cond_prob_notinI}. For all $y$,  we have $f_{\tilde Y|Y}(\tilde y,y)=0$, when $\tilde y\notin\NN_\I$. This means the randomized response $\tilde y\in \NN_{\I}$ whenever the original response is.
    
    For any $\tilde y\in\NN_\I$ and any $y$, $\max(f_{\tilde Y|Y}(\tilde y,y))=\frac{1}{\gamma}$ and $\min(f_{\tilde Y|Y}(\tilde y,y))=\frac{e^{-\epsilon}}{\gamma}$, which satisfies the $\epsilon$-label DP property. 
    Therefore, we can conclude the result.
\end{proof}
\section{Label DP for Regression with Prior Unknown}
\label{sec:prior_unknown}
Our approach, RPWithPrior, is based on the availability of priors that are known in advance. However, there are scenarios where obtaining such priors is not always feasible. In this section, we propose an alternative algorithm that uses a histogram to approximate the prior $f_Y(y)$, computed from randomized responses generated by an additive mechanism (e.g., the Laplace mechanism) to guarantee label DP.

Let's consider the training dataset denoted as $D$ and $\mathcal M_{\epsilon,\hbox{Lap}}(y)$ be the randomized response of $y$ by Laplace mechanism for any $(x,y)\in D$. To be precise, we first calculate the sample expectation $\mu$:  
$\mu=\frac1{\#(D)}\sum_{(x,y)\in D}\mathcal M_{\epsilon,\hbox{Lap}}(y)$, where $\#(\cdot)$ is the element number in the set. For a given positive parameter $\sigma$, let $k_0$ and $k_1$ be integers satisfying  $\mu+k_0\sigma\le\min_{(x,y)\in D}\mathcal M_{\epsilon,\hbox{Lap}}(y)<\mu+(k_0+1)\sigma$ and $\mu+k_1\sigma<\max_{(x,y)\in D}\mathcal M_{\epsilon,\hbox{Lap}}(y)\le\mu+(k_1+1)\sigma$, the nodes are set as $n_0=\min_{(x,y)\in D}\mathcal M_{\epsilon,\hbox{Lap}}(y)$, $n_i=\mu+ (i+k_0)\sigma, i=1,\cdots,k_1-k_0$ and $n_{k_1-k_0+1}=\max_{(x,y)\in D}\mathcal M_{\epsilon,\hbox{Lap}}(y)$. Define $[n]=\{0,\cdots, n\},\ \forall n\in\N$ and $J_k=
\left[n_k,n_{k+1}\right)$, if $ k\in[k_1-k_0-1]$, and 
$J_k=\left[n_k,n_{k+1}\right]$, if $k=k_1-k_0$,
% \begin{equation}\label{def:node}
%     n_i=\left\{
%    \begin{array}{ll}
%      \min_{(x,y)\in D}y,   & \hbox{ if } i=0, \\
%       \mu+ (i+k_0)\sigma,   & \hbox{ if }i=1,\cdots,k_1-k_0,\\
%       \max_{(x,y)\in D}y & \hbox{ if } i=k_1-k_0+1,
%    \end{array}
%    \right.
% \end{equation}
% Define $[n]=\{0,\cdots, n\},\ \forall n\in\N$ and 
% \begin{equation*}
% \begin{split}
% J_k=
% \left\{
% \begin{array}{ll}
% \left[n_k,n_{k+1}\right), & k\in[k_1-k_0-1],\\
% \left[n_k,n_{k+1}\right], &k=k_1-k_0,\\
% \end{array}
% \right.
% \end{split}\ S_k=\{y|y\in J_k,(x,y)\in D\},
% \end{equation*}
then the histogram expression of $f_Y(y)$ is 
\begin{equation*}
\label{eq:prior}
\begin{split}
f_Y(y)=
\left\{
\begin{array}{cl}
\frac{\#(S_k)}{\#(D)},&y\in J_k, \forall k\in[k_1-k_0],\\
0,&\hbox{otherwise},
\end{array}
\right.\ S_k=\{y|\mathcal M_{\epsilon,\hbox{Lap}}(y)\in J_k,(x,y)\in D\}.
\end{split}
\end{equation*}
 % Let $\mathcal M_{\epsilon,\hbox{Lap}}(y)$ denote the randomized response of $y$ under the Laplace mechanism with privacy budget $\epsilon$.
 The histogram-based algorithm for prior estimation is then formulated as follows:
\begin{algorithm}[ht]
	\begin{algorithmic}[1]
		\Require{Dataset $D$, a positive parameter $\sigma>0$.
		}
   \State  $\mu=\frac1{\#(D)}\sum_{(x,y)\in D}\mathcal M_{\epsilon,\hbox{Lap}}(y)$, \Comment{Sample mean}
   \State Compute integers $k_0$ and $k_1$  satisfying  $$\mu+k_0\sigma\le\min_{(x,y)\in D}\mathcal M_{\epsilon,\hbox{Lap}}(y)<\mu+(k_0+1)\sigma,\ \mu+k_1\sigma<\max_{(x,y)\in D}\mathcal M_{\epsilon,\hbox{Lap}}(y)\le\mu+(k_1+1)\sigma.$$
   \State Compute nodes of histogram
   $$n_i=\left\{
   \begin{array}{ll}
     \min_{(x,y)\in D}\mathcal M_{\epsilon,\hbox{Lap}}(y),   & \hbox{ if } i=0, \\
      \mu+ (i+k_0)\sigma,   & \hbox{ if }i=1,\cdots,k_1-k_0,\\
      \max_{(x,y)\in D}\mathcal M_{\epsilon,\hbox{Lap}}(y), & \hbox{ if } i=k_1-k_0+1.
   \end{array}
   \right.$$
   \State Compute intervals of histogram
   $$
J_k=
\left\{
\begin{array}{ll}
\left[n_k,n_{k+1}\right), & k\in[k_1-k_0-1],\\
\left[n_k,n_{k+1}\right], &k=k_1-k_0,\\
\end{array}
\right.
\ S_k=\{y|\mathcal M_{\epsilon,\hbox{Lap}}(y)\in J_k,(x,y)\in D\}.$$
%   \State 
\State  $\alpha_k=\#(S_k)/\#(D),k=0,\cdots,k_1-k_0$.\Comment{Histogram expression}\\
\Return $\{(J_k,\alpha_k)\}$.
\end{algorithmic}
\caption{Estimate the prior by histogram (Hist$_\epsilon$)}
   \label{alg:histogram}
\end{algorithm}
% Due to space constraints, we present the histogram algorithm (Algorithm \ref{alg:histogram}) in Appendix \ref{appendix:histogram}.
\begin{remark}
    The length of the histogram intervals in Algorithm \ref{alg:histogram} can be adjusted according to specific requirements by modifying the value of $\sigma$. When the prior distribution is complex, a smaller interval length enables the histogram to more accurately approximate the prior distribution; however, this comes at the cost of increased computational complexity. In this paper, we set $\sigma$ to be the standard deviation of the randomized data by Laplace mechanism.
\end{remark}
By utilizing the histogram as the prior, we can employ Algorithm \ref{alg:prior_known} to obtain the optimal values of $A_1$ and $A_2$. These optimal values are then used to sample for each $(x, y)$ in the dataset $D$. Subsequently, we train the model $M$ on the datasets. 
To provide a comprehensive overview, we present the complete algorithm in Algorithm \ref{alg:prior_unknown}.  Algorithm \ref{alg:prior_unknown} splits the privacy budget into $\epsilon_1,\epsilon_2$, which follows that the entire algorithm is $(\epsilon_1+\epsilon_2)$-label DP. 
We also provide error analysis of the predicted responses and the the true responses, which is deferred to Theorem \ref{appendix:error_analysis} in Appendix \ref{appendix:error_analysis}. %, due to page limitations. % Algorithm \ref{alg:prior_unknown} is $(\epsilon_1+\epsilon_2)$-label DP.
\begin{algorithm}[ht]
	\begin{algorithmic}[1]
		\Require{Dataset $D=\{(x_1,y_1),(x_2,y_2),\cdots,(x_n,y_n)\}$ and privacy budgets $\epsilon_1,\epsilon_2$.
		}
   \State Set $\tilde D=\emptyset$,
   \State estimate the prior by histogram via Hist$_{\epsilon_1}$ in Algorithm \ref{alg:histogram},
   \For{$(x_i,y_i)\in D$}
   \State sample $\tilde y_i$ according to RPWithPrior$_{\epsilon_2}$ Algorithm \ref{alg:prior_known}, 
   \State $\tilde D=\tilde D\cup\{(x_i,\tilde y_i)\}$,
   \EndFor
   \State $M$ is the regression model trained on the set $\tilde D$.\\
   \Return $M$.
\end{algorithmic}
\caption{Response Privacy with Histogram}
   \label{alg:prior_unknown}
\end{algorithm}
%######################################################################################################
\section{Numerical Implementation}
\label{sec:numerical}
In numerical experiment section, we conduct a comprehensive performance evaluation of our proposed method and compare it with several existing mechanisms, including the Gaussian mechanism \cite{dwork2006our}, the Laplace mechanism \cite{dwork2006calibrating}, the staircase mechanism \cite{geng2014optimal}, and RRonBins \cite{ghazi2022regression}. To avoid overfitting, we incorporate an $L_2$ regularizer into the loss functions of all these mechanisms. 
It's worth noting that the Gaussian mechanism is an approximate differential privacy method, and as such, we cannot set $\zeta=0$ (as defined in the DP framework) in our numerical tests. Here, we set $\zeta=10^{-4}$ in the following experiments. %The training procedure for all datasets was conducted without early stopping. 

Our focus is on three specific datasets: the Communities and Crime dataset \cite{redmond2002data}, the Criteo Sponsored Search Conversion Log dataset \cite{tallis2018reacting}, and the California housing dataset \cite{pace1997sparse}. In the following experiments, we conduct a total of 10 trials for the Gaussian, Laplace, Staircase,  RRonBins mechanisms and our proposed mechanism. In each trial, we randomly divide the entire dataset into training (80\%) and test (20\%) sets using different random seeds. 

 For all regression tasks in our experiments, we employ mean squared error (MSE) as the loss function $\mathcal L$ and evaluate  performance using both the mean and standard deviation of the test MSE. The reported results are the average and standard deviation of the test mean squared error over the 10 trails. Note that we evaluate results based on the interval ranging from mean minus standard deviation to mean plus standard deviation. The best-performing intervals are those that overlap with the interval of the smallest mean, which are highlighted in bold. All experiments were conducted on a workstation equipped with an Intel Xeon W-2245 CPU, NVIDIA RTX 3080 Ti GPU, and 128GB RAM, using PyTorch under CUDA 11.7. The numerical implementations are available at \url{https://github.com/liuhaixias1/Response_privacy/}.
%Due to space constraints, descriptions of the datasets, technical details, and computational complexity analyses are provided in Appendix \ref{appendix:detail_numerical}. %\red{We also evaluate our proposed algorithm on simulated datasets (linear and nonlinear), which is deferred to Appendix \ref{appendix:simulated}.}
 \subsection{The Communities and Crime Dataset}
  \begin{table}[ht]
	\centering
	\caption{Comparison results on the Communities and Crime dataset.}
	\label{tab:crime}
\resizebox{1\columnwidth}{!}{
\begin{tabular}{|c|c|c|c|c|c|c|} 
		\hline
		\multirow{2}{*}{\begin{tabular}[c]{@{}c@{}}Privacy \\Budget\end{tabular}} & Laplace & Gaussian & Staircase & RRonBins & Ours & Unbiased  \\ 
		\cline{2-7}
		& Mean $\pm$   Std                & Mean  $\pm$   Std                 & Mean   $\pm$ Std                   & Mean   $\pm$ Std                  & Mean   $\pm$ Std & Mean   $\pm$ Std              \\ 
	\hline
	0.05                     & 8.3956 $\pm$ 3.1062              & 33.0362 $\pm$ 11.6948             & 7.0269 $\pm$ 3.0227                & 0.1033	$\pm$ 0.0119 & \bf 0.0679 $\pm$ 0.0070  &  0.1293$\pm$0.0096         \\
	0.1                      & 2.4534 $\pm$ 1.1036              & 17.5311 $\pm$ 7.1242              & 2.3541 $\pm$ 0.6848                & 0.0964	$\pm$ 0.0243 & \bf 0.0683 $\pm$ 0.0061  &0.1194$\pm$0.0056          \\
	0.3                      & 0.3232 $\pm$ 0.0631              & 2.4218  $\pm$ 1.1848              & 0.2973 $\pm$ 0.0731                & 0.1107	$\pm$ 0.0221               & \bf 0.0604 $\pm$ 0.0092   & 0.1110$\pm$0.0094        \\
	0.5                      & 0.1615 $\pm$ 0.0513              & 0.8934  $\pm$ 0.3259              & 0.1127 $\pm$ 0.0226                & 0.1108	$\pm$ 0.0349               & \bf 0.0544	$\pm$ 0.0065   &0.1037$\pm$0.0089         \\
	0.8                      & 0.0679 $\pm$ 0.0122              & 0.4376  $\pm$ 0.1317              & \bf 0.0621 $\pm$0.0124                & 0.1068	$\pm$ 0.0188               & \bf 0.0453	$\pm$ 0.0057 &0.0892$\pm$0.0064           \\
	1.0                      & \bf 0.0533 $\pm$ 0.0117              & 0.3138  $\pm$ 0.1096              & \bf 0.0467 $\pm$ 0.0064             & 0.0962	$\pm$ 0.0133               & \bf 0.0411 $\pm$ 0.0042   &0.0826$\pm$0.0039         \\
	1.5                      & \bf 0.0353 $\pm$ 0.0081              & 0.1481  $\pm$ 0.0481              & \bf 0.0344 $\pm$ 0.0055                & 0.0827	$\pm$ 0.0162               & \bf 0.0372	$\pm$  0.0023    &0.0668$\pm$0.0041        \\
	2                        & \bf0.0275 $\pm$ 0.0032              & 0.0947  $\pm$ 0.0167              & \bf 0.0255 $\pm$ 0.0027                & 0.0628 $\pm$	0.0110               &  \bf 0.0329$\pm$ 0.0036  &0.0521$\pm$0.0032        \\
	3                        & \bf 0.0223 $\pm$  0.0018              & 0.0600  $\pm$ 0.0132              & \bf0.0241 $\pm$ 0.0026                & 0.0332	$\pm$ 0.0062               &  \bf 0.0244	$\pm$ 0.0027   &0.0355$\pm$0.0021         \\
	4                        & \bf 0.0216 $\pm$ 0.0020              & 0.0368  $\pm$ 0.0084              & \bf 0.0239 $\pm$ 0.0033                & 0.0282	$\pm$ 0.0038               & \bf0.0212	$\pm$  0.0033   &\bf 0.0259$\pm$0.0024         \\
	6                        & \bf0.0188 $\pm$ 0.0017              & 0.0291  $\pm$ 0.0042              & 0.0249 $\pm$ 0.0025                &\bf 0.0202$\pm$ 0.0027               & \bf 0.0205	$\pm$ 0.0023   & \bf 0.0222$\pm$0.0016         \\
	8                        & \bf 0.0201 $\pm$  0.0021              & 0.0231  $\pm$ 0.0020              & 0.0241 $\pm$ 0.0024                & \bf 0.0187	$\pm$ 0.0020               & \bf 0.0209	$\pm$ 0.0018  &\bf 0.0204 $\pm$0.0017          \\
    $+\infty$ & \bf 0.0182 $\pm$ 0.0016&\bf 0.0182 $\pm$ \bf 0.0016& \bf 0.0182 $\pm$ 0.0016& \bf 0.0184 $\pm$ 0.0009& \bf 0.0177 $\pm$ 0.0021 & \bf 0.0194 $\pm$0.0018\\
	\hline
\end{tabular}}
\end{table}
 The Communities and Crime dataset is a combined socio-economic data from the 1990 US Census, law enforcement data from the 1990 US LEMAS survey, and crime data from the 1995 FBI UCR, which can be downloaded from the website of UCI machine learning repository\footnote{\url{https://archive.ics.uci.edu/ml/datasets/communities+and+crime}}. The creator is Michael Redmond from La Salle University. It is multivariate with 1994 instances and 128 attributes. Note that there are some missing values in the dataset, it is necessary to deal with in advance. In this paper, we use the similar technique as the paper of Selective Regression Under Fairness Criteria \cite{shah2022selective}. First of all, we remove the useless attributes for regression, including {\it state, county, community, communityname, fold}. For these attributes with missing values, we remove all samples with missing values unless the attribute is {\it OtherPerCap}, whose missing value is replaced by the mean of all the values in this attribute \cite{chi2021understanding}. After preprocessing, there are 101 attributes left. 

Next, we investigate the relationship between the response variable, specifically the {\it ViolentCrimesPerPop}, and the remaining features. To accomplish this, we employ a simple neural network consisting of three fully-connected layers. 
The first and second layers of the neural network are fully-connected layers followed by Rectified Linear Unit (ReLU) activation function. The third layer is also a fully-connected layer.

We train it using the Adam optimizer with an initial learning rate of $0.001$. The model undergoes training for a total of 50 epochs and set the batchsize equal to $256$, with the learning rate decayed by a factor of 10 at the 25th epoch. For the $L_2$ regularizer, we set the factor to $\hbox{weight\_decay}=1\text{e-}4$.

Table \ref{tab:crime} provides the quantitative comparison results of the Gaussian, Laplace, Staircase, RRonBins, Unbiased mechanisms and our proposed one for various privacy budgets. 
Based on the results presented in Table \ref{tab:crime}, our proposed method exhibits smaller mean and standard deviation of the test MSE values across different privacy budgets compared to the other methods. Consequently, we can conclude that our proposed method has good performance compared with the additive noise mechanisms and the RRonBins, Unbiased mechanisms.
%#################
%%%%%%%%%%%%%%%%%%%%%

\subsection{The Criteo Sponsored Search Conversion Log Dataset}
%#############################
\begin{table}[ht]
	\centering
	\caption{Comparison results on the Criteo Sponsored Search Conversion Log dataset.}
	\label{tab:criteo}
	\resizebox{1\columnwidth}{!}{
		\renewcommand\arraystretch{1.1}
\begin{tabular}{|c|l|l|l|l|l|l|} 
		\hline
		\multirow{2}{*}{\begin{tabular}[c]{@{}c@{}}Privacy \\Budget\end{tabular}} & Laplace & Gaussian & Staircase & RRonBins & Ours & Unbiased \\ 
		\cline{2-7}
		& Mean $\pm$   Std                & Mean  $\pm$   Std                 & Mean   $\pm$ Std                   & Mean   $\pm$ Std                  & Mean   $\pm$ Std  & Mean   $\pm$ Std             \\ 
	\hline
	0.05                                                                      & 1878491.63 $\pm$ 376368.64       & 16803328.94 $\pm$ 998367.25       & 1736473.22 $\pm$ 122674.30& 11339.71	$\pm$ 36.45      & \bf 6882.55	$\pm$ 40.96 & 10572.59$\pm$1134.12          \\
	0.1                                                                       & 472390.91  $\pm$ 14030.15        & 4458946.92  $\pm$ 385131.83       & 464807.11  $\pm$ 19835.24          & 11328.04	$\pm$ 36.34    &
     \bf6839.72	$\pm$ 25.78 & 7884.54$\pm$664.82\\
	0.3                                                                       & 60599.02   $\pm$ 1124.63         & 567545.99   $\pm$ 8905.76         & 55470.30   $\pm$ 1296.68           & 11185.20 $\pm$ 	36.10       &         6526.58	$\pm$ 54.49 & \bf5807.11$\pm$393.60\\
	0.5                                                                       & 24659.75   $\pm$ 745.69          & 215649.49   $\pm$ 2694.33         & 21717.01   $\pm$ 417.43            & 10907.33	$\pm$ 36.54	      &          6302.61 	$\pm$ 6246 & \bf4826.07$\pm$250.91            \\
	0.8                                                                       & 11982.80   $\pm$ 290.64          & 87173.02    $\pm$ 1582.45         & 10328.00   $\pm$ 255.31            & 10256.37	$\pm$ 37.39         &       5931.67$\pm$81.26  & \bf4769.94$\pm$210.42          \\
	1.0                                                                       & 8854.12    $\pm$ 267.56          & 58036.92    $\pm$ 1286.23         & 7762.13    $\pm$ 230.03            & 9744.08	$\pm$ 37.59      &          5742.74	$\pm$ 114.96 & \bf4630.83$\pm$171.56          \\
	1.5                                                                       & 5759.41    $\pm$ 178.68          & 28740.18    $\pm$ 481.94          &  5301.81    $\pm$ 274.90            & 8406.88	$\pm$ 36.57         &       4659.06	$\pm$ 72.15 & \bf 4472.40$\pm$92.37           \\
	2                                                                         & 4787.05    $\pm$ 127.27          & 17620.97    $\pm$ 493.14          & \bf 4516.25    $\pm$ 62.59             & 7294.93	$\pm$ 34.03               & \bf4306.80 $\pm$	131.89 & \bf 4459.39$\pm$78.33           \\
	3                                                                         & \bf 3848.77    $\pm$184.59          & 9780.89     $\pm$ 136.51          & 4103.87    $\pm$ 101.25            & 5577.50	$\pm$ 31.75               &\bf4042.74 $\pm$	111.63 & 4330.81$\pm$58.60          \\
	4                                                                         & \bf 3551.65    $\pm$ 164.53          & 7081.23     $\pm$ 259.26          & 4080.65    $\pm$ 128.19            & 4769.61	$\pm$ 25.01              & \bf3229.45	$\pm$ 129.65 & 4244.34$\pm$76.74           \\
	6                                                                         & \bf 3339.38    $\pm$ 141.50          & 5024.29     $\pm$ 340.55          & 4144.02    $\pm$ 75.71             & 4371.68	$\pm$ 25.31              & \bf 3185.96 $\pm$	69.52 & \bf 3241.17$\pm$44.89           \\
	8                                                                         & 3164.46    $\pm$83.76           & 4163.70     $\pm$ 119.34          & 4322.13    $\pm$ 187.16            &  4333.12	$\pm$ 31.94             &  3104.52 $\pm$ 42.10 & \bf 2911.97$\pm$58.3        \\
    $+\infty$ &  3119.31 $\pm$ 99.39&  3119.31 $\pm$ 99.39&  3119.31 $\pm$ 99.39&4319.86 $\pm$ 29.27& 3119.01 $\pm$ 70.31 & \bf 2798.53$\pm$29.11\\
	\hline
\end{tabular}
	}
\end{table}
The Criteo Sponsored Search Conversion Log Dataset is public and can be downloaded from the Criteo AI Lab website\footnote{\url{https://ailab.criteo.com/criteo-sponsored- search-conversion-log-dataset/}}. This dataset represents a sample of 90 days of Criteo live traffic data. Each line corresponds to one click (product related advertisement) that was displayed to a user. For each advertisement, we have detailed information about the product. Further, it also provides information on whether the click led to a conversion, amount of conversion and the time between the click and the conversion.  Data has been sub-sampled and anonymized so as not to disclose proprietary elements. 

In the Criteo Sponsored Search Conversion Log Dataset, there are 23 attributes (the first three are outcome/responses and the remaining are features). We use 21 of them except {\it Sale, click\_timestamp}, where {\it SalesAmountInEuro} is considered as the response and the remaining are features. In the features,  {\it Time\_delay\_for\_conversion, nb\_clicks\_1week} are integers, {\it product\_price} is float, and {\it product\_age\_group, device\_type, audience\_id, product\_gender, product\_brand, product\_category (1-7), product\_country, product\_id, product\_title, partner\_id, user\_id} are hashed features for user privacy. Therefore, it is necessary to do preprocessing of the hashed features before being used for response DP. We use the very similar technique as that for kaggle-display-Advertising-challenge-dataset (see the webpage \url{https://github.com/rixwew/pytorch-fm/blob/master/torchfm/dataset/criteo.py}) with a small modification. We count  all the unique values for each feature and map them to their corresponding unique value if the occurrence is large than the preassigned thresholding, otherwise consider as a single out-of-vocabulary. For the architecture of the neural networks used for regression, we benefit from MentorNet \cite{jiang2018mentornet} used for the non-numerical features (all but attributes {\it Time\_delay\_for\_conversion, nb\_clicks\_1week, product\_price}) and put them to embeddings. Then we combine the embedded features  with  {\it Time\_delay\_for\_conversion, nb\_clicks\_1week, product\_price} together as the input and feed them into a neural network with 3 fully-connected layers.

Now, we are ready for response DP. We plan to predict the relationship between response ({\it SalesAmountInEuro}) and the features. There are 15,995,634 samples in the dataset, but it is unfortunate that some of them are with no conversion happened (corresponding to {\it SalesAmountInEuro}$=-1$) and some are too large (the largest one is 62,458.773). We have 1,645,977 samples left after removing  those useless ones (-1 or larger than 400).  We set the maximum number of epochs to $50$ and the batch size to $8192$. The model is trained using the RMSProp optimizer with an initial learning rate of $1\hbox{e-}3$, and the learning rate is decayed using the {\it CosineAnnealingLR} method. Additionally, we set the $L_2$ regularizer parameter $\hbox{weight\_decay}$ to $1\hbox{e-}4$.

In the following, we quantitatively evaluate our proposed method. We compare our method with the Gaussian, Laplace, Staircase,  RRonBins, and Unbiased mechanisms. Table \ref{tab:criteo} presents the quantitative comparison results for the Gaussian, Laplace, and Staircase mechanisms at different privacy budgets.

  Based on the results presented in Table \ref{tab:criteo}, our method and the unbiased mechanism outperform other mechanisms in the Criteo Sponsored Search Conversion Log dataset.

\subsection{The California Housing Dataset}
This California Housing dataset was derived from the 1990 U.S. census, which can be obtained from the StatLib repository\footnote{\url{https://www.dcc.fc.up.pt/~ltorgo/Regression/cal_housing.html}} or import through the command {\it fetch\_california\_housing}\footnote{\url{https://scikit-learn.org/stable/modules/generated/sklearn.datasets.fetch_california_housing.html\#sklearn.datasets.fetch_california_housing}} in sklearn.datasets. 
\begin{table}[ht]
	\centering
	\caption{Attribute Information.} 
\label{table:iattribute_house}
	\begin{tabular}{ll}
\hline
Name & Description \\
\hline
MedInc  &      median income in block group\\
        HouseAge &     median house age in block group\\
        AveRooms  &    average number of rooms per household\\
        AveBedrms  &   average number of bedrooms per household\\
        Population &   block group population\\
        AveOccup  &    average number of household members\\
        Latitude  &    block group latitude\\
        Longitude  &   block group longitude\\
\hline
\end{tabular}
\end{table}

In this dataset, we have information regarding the demography (income, population, house occupancy) in the districts, the location of the districts (latitude, longitude), and general information regarding the house in the districts (number of rooms, number of bedrooms, age of the house). There are 20640 instances, 9 attributes: 8 numeric, predictive attributes and the target, where the attribute information is shown in Table \ref{table:iattribute_house}. Since these statistics are at the granularity of the district, they correspond to averages or medians. There is no missing attribute values, we use the data directly. 

For the architecture of the regression neural networks, we employ a 3-layer fully-connected neural network. The model is trained for a total of 50 epochs, with a batch size set to $256$. During training, we utilize the Adam optimizer with an initial learning rate of $0.001$. The learning rate is decayed by a factor of 10 at the 25th epoch. For $L_2$ regularization, we set the regularization factor to $\hbox{weight\_decay}=1\text{e-}4$.
\begin{table*}[ht]
	\centering
	\caption{Comparison results on the Housing dataset.} 
	\label{tab:house}
	\resizebox{1\columnwidth}{!}{
	\begin{tabular}{|c|c|c|c|c|c|c|} 
		\hline
		\multirow{2}{*}{\begin{tabular}[c]{@{}c@{}}Privacy \\Budget\end{tabular}} & Laplace & Gaussian & Staircase & RRonBins & Ours & Unbiased  \\ 
		\cline{2-7}
		& Mean $\pm$   Std                & Mean  $\pm$   Std                 & Mean   $\pm$ Std                   & Mean   $\pm$ Std                  & Mean   $\pm$ Std & Mean   $\pm$ Std              \\ 
		\hline
		0.05                     & 14.2061 $\pm$ 12.1058            & 59.1688 $\pm$ 25.9966             & 9.9673 $\pm$ 3.7562                & 1.6680	$\pm$ 0.1241               & \bf 1.5470	$\pm$ 0.1034 & \bf 1.6307$\pm$0.1155           \\
		0.1                      & 4.8926  $\pm$ 0.8567             & 14.6442 $\pm$ 10.7207             & 5.6695 $\pm$ 1.6705                & 1.6938	$\pm$ 0.1279               & \bf 1.5400	$\pm$ 0.0894 & \bf 1.6554$\pm$0.0542           \\
		0.3                      & 2.2165  $\pm$ 0.5216             & 4.8875  $\pm$ 0.9392              & 2.6639 $\pm$ 1.3920                & 1.6132	$\pm$ 0.1017               & \bf 1.5035	$\pm$ 0.0615 & \bf 1.5656$\pm$0.0625           \\
		0.5                      & \bf 1.5666  $\pm$  0.2492             & 3.4958  $\pm$ 0.9716              & \bf 1.4574 $\pm$0.1180                & 1.5477	$\pm$ 0.0754               & \bf 1.4537	$\pm$ 0.0868  & \bf 1.4666$\pm$0.0644          \\
		0.8                      & \bf 1.1604  $\pm$0.1397             & 2.8569 $\pm$ 0.8632              &\bf 1.0189 $\pm$ 0.0868                & 1.5397	$\pm$ 0.0915               &  \bf 1.1232	$\pm$ 0.0370 & 1.42420$\pm$0.0700            \\
		1.0                      & \bf1.0121  $\pm$ 0.1150             & 2.3481  $\pm$ 0.6479              & \bf 0.8862 $\pm$0.0933                & 1.4407	$\pm$ 0.0969        & \bf 1.0726	$\pm$ 0.0673  & 1.3641$\pm$0.0617          \\
		1.5                      & \bf 0.8262  $\pm$ 0.0761             & 1.6685  $\pm$ 0.3689              & \bf 0.7905 $\pm$ 0.0552                & 1.2294	$\pm$ 0.0516               & 0.8797	$\pm$ 0.0324 & 1.1896$\pm$0.0447          \\
		2                        & \bf 0.7527  $\pm$ 0.0380             & 1.4513  $\pm$ 0.2093              &\bf 0.7608 $\pm$ 0.0559                & 1.0917	$\pm$ 0.1835               & \bf 0.7946	$\pm$ 0.0831 & 1.0882$\pm$0.0657           \\
		3                        & \bf 0.7279  $\pm$ 0.1016             & 1.0725  $\pm$ 0.1550              & \bf 0.7141 $\pm$ 0.0329                & 0.7920	$\pm$ 0.0604               & \bf 0.6732	$\pm$ 0.0698 & 0.9587$\pm$0.0766          \\
		4                        & \bf 0.6562  $\pm$ 0.0495             & 0.8827  $\pm$ 0.0689              & 0.7087 $\pm$ 0.0573                & 0.6755	$\pm$ 0.0477               & \bf 0.6325	$\pm$ 0.0227      & 0.7659$\pm$0.0261      \\
		6                        &\bf 0.6153  $\pm$ 0.0234             & 0.7588  $\pm$ 0.0725              & 0.7113 $\pm$ 0.0402                & 0.6219	$\pm$ 0.0438       & \bf  0.6106	$\pm$ 0.0320  & \bf 0.6676$\pm$0.0334          \\
		8                        &\bf 0.6065  $\pm$ 0.0463             & 0.7270  $\pm$ 0.0408              & 0.7166 $\pm$ 0.0376                & \bf 0.6195	$\pm$ 0.0466 &  \bf 0.5990	$\pm$ 0.0258 & \bf 0.6282$\pm$0.0358          \\
        $+\infty$ & \bf 0.5922 $\pm$ 0.0244& \bf 0.5922 $\pm$ 0.0244& \bf 0.5922 $\pm$ 0.0244&\bf 0.5912 $\pm$ 0.0301&\bf 0.5852 $\pm$ 0.0224 & \bf 0.6110 $\pm$0.0292\\
		\hline
	\end{tabular}}
\end{table*}

The comparison results can be found in Table \ref{tab:house} on the California Housing dataset. Our method achieves the best performance across all privacy budgets. While the Laplace, Staircase and Unbiased mechanisms perform better on some privacy budgets, our method is consistent in all cases. %For smaller privacy budgets ($\epsilon=0.05, 0.1, 0.3, 0.5$), our method outperforms the others. , demonstrating that it ranks as the second best. At larger privacy budgets ($\epsilon=3, 4, 6, 8$), our method and the Laplace mechanism demonstrate the strongest performance. 

\section{Conclusions}
\label{sec:conclusions}
In this paper, we introduce a novel algorithm for label differential privacy in regression. Our approach, called RPWithPrior, leverages a known global prior distribution to ensure $\epsilon$-label differential privacy. In scenarios where the global prior distribution is unknown, we propose an alternative algorithm that employs a histogram to estimate the prior probability density function. This estimated function is then utilized as a prior in RPWithPrior. Theoretical analysis demonstrates that our proposed algorithms guarantee $\epsilon$-label differential privacy. Furthermore, we provide numerical experiments to evaluate the effectiveness of our proposed method.

% This work has its own limitations: \red{This work focuses on the setting of univariate regression, where the label $y$ is a scalar, and does not extend to multivariate regression or non-scalar labels. A natural pathway for such an extension would be to assume that the randomized mechanisms operate independently across dimensions. Under an independence assumption, the one-dimensional randomized mechanism developed in this paper could be applied independently to each dimension $y_i$ of the multivariate label. A thorough investigation of multivariate extensions, including the relaxation of the independence assumption, is a compelling direction for future research. In addition, we acknowledge that the current work does not establish the unbiasedness of our proposed estimator within the label DP framework. A formal investigation into these properties is an important direction for future research.}

% \bibliographystyle{plain}  %plainnat,abbrvnat,unsrtnat
% \small
\bibliography{label_dp}
\bibliographystyle{alpha}

\appendix
\section{Preliminary}\label{sec:preliminary}
Before delving deeper into the topic, it is important to establish the definitions of neighboring datasets, differential privacy and label differential privacy.
\begin{definition}[Neighboring datasets \cite{dwork2006calibrating}]
Two datasets $D$ and $D'$ are called the neighboring datasets if they differ in at most one elements. That is, one
is a proper subset of the other and the larger dataset contains just one additional element.
\end{definition}
\begin{definition}[Differential Privacy \cite{dwork2006calibrating}]
Let $\A$ be a randomized algorithm which takes a dataset as an input. For any $\epsilon,\delta\in\R_+$, $\A$ is said to be $(\epsilon,\delta)$-differentially private ($(\epsilon,\delta)$-DP) if 
$P[\A(\D)\in\SS]\le e^\epsilon P[\A(\D')\in\SS]+\delta$
holds for any neighboring training sets $\D,\D'$ and any output sets $\SS$ of $\A$. When $\delta=0$, we call $\A$ is $\epsilon$-differential privacy ($\epsilon$-DP). 
\end{definition}
The definition of label differential privacy can be stated as follows:
\begin{definition}[Label Differential Privacy \cite{chaudhuri2011sample}]
Let $\A$ be a randomized algorithm that takes a dataset as an input. For any $\epsilon, \delta \in \mathbb{R}_+$, $\A$ is said to be $(\epsilon, \delta)$-label differentially private ($( \epsilon, \delta)$-label DP) if the following condition holds for any training sets $\D, \D'$ that differ only in the label of a single example and any output sets $\SS$ of $\A$: 
$P[\A(\D) \in \SS] \leq e^\epsilon P[\A(\D') \in \SS] + \delta$. 
When $\delta = 0$, we refer to $\A$ as $\epsilon$-label differential private ($\epsilon$-label DP).
\end{definition}

\section{Proof of Lemma \ref{lem:maximizer}}\label{appendix:proof_maximizer}
\begin{proposition}\label{prop_appendix:labeldp}
Let $\I=[A_1,A_2]$ and $\gamma=2\zeta+e^{-\epsilon}(A_2-A_1)$. Assume $f_{\tilde Y|Y}(\tilde y,y)=v\ge0, \forall y\in\I, \forall\tilde y\in \NN_y$. When $ y\in\I$, we can make the following statement
\begin{equation}
\label{eq:dp}
\int_{\NN_\I}f_{\tilde Y|Y}(\tilde y,y)d\tilde y\ge \gamma\cdot v.
\end{equation}
\end{proposition}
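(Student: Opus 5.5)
Fix $y\in\I$ and split $\NN_\I=[A_1-\zeta,A_2+\zeta]$ into the neighbourhood $\NN_y=[y-\zeta,y+\zeta]$ and its complement $\NN_\I\setminus\NN_y$. Then
\[
\int_{\NN_\I}f_{\tilde Y|Y}(\tilde y,y)\,d\tilde y=\int_{\NN_y}f_{\tilde Y|Y}(\tilde y,y)\,d\tilde y+\int_{\NN_\I\setminus\NN_y}f_{\tilde Y|Y}(\tilde y,y)\,d\tilde y .
\]
Since $y\in[A_1,A_2]$, we have $\NN_y\subseteq\NN_\I$. By the assumption $f_{\tilde Y|Y}(\tilde y,y)=v$ on $\NN_y$, the first integral is exactly $2\zeta v$. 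It therefore suffices to show that the second integral is at least $e^{-\epsilon}(A_2-A_1)v$.

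The key step is a pointwise lower bound on the complement coming from the $\epsilon$-label DP constraint (the second constraint of \eqref{eq:model}, i.e.\ \eqref{eq:inequality_density}). Take any $\tilde y\in\NN_\I\setminus\NN_y$. I claim there is some $y'\in\I$ with $\tilde y\in\NN_{y'}$. For this, note that $\NN_\I=\cup_{y'\in\I}\NN_{y'}$, so one can take $y'=P_\I(\tilde y)$: the projection of $\tilde y$ onto $[A_1,A_2]$ lies within distance $\zeta$ of $\tilde y$ whenever $\tilde y\in[A_1-\zeta,A_2+\zeta]$. For such a $y'$ we have $f_{\tilde Y|Y}(\tilde y,y')=v$. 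The DP inequality $f_{\tilde Y|Y}(\tilde y,y')\le e^{\epsilon}f_{\tilde Y|Y}(\tilde y,y)$ then gives
\[
f_{\tilde Y|Y}(\tilde y,y)\ge e^{-\epsilon}v\qquad\text{for all }\tilde y\in\NN_\I\setminus\NN_y .
\]

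Integrating this bound gives $\int_{\NN_\I\setminus\NN_y}f_{\tilde Y|Y}(\tilde y,y)\,d\tilde y\ge e^{-\epsilon}v\,|\NN_\I\setminus\NN_y|$. It remains to compute the Lebesgue measure of the complement:
\[
|\NN_\I\setminus\NN_y|=(A_2-A_1+2\zeta)-2\zeta=A_2-A_1 .
\]
Adding the two pieces yields $(2\zeta+e^{-\epsilon}(A_2-A_1))v=\gamma v$, which is \eqref{eq:dp}.

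The main obstacle is the covering claim $\tilde y\in\NN_{P_\I(\tilde y)}$, together with handling the DP inequality "almost everywhere" as in Proposition~\ref{prop:labeldp}. The covering claim is a short case check on whether $\tilde y<A_1$, $\tilde y\in\I$, or $\tilde y>A_2$. Since the inequality only needs to hold up to null sets, measure-zero exceptions do not affect the integral, so I would simply remark on this.
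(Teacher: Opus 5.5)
Your proof is correct and follows essentially the same route as the paper's: a pointwise bound $f_{\tilde Y|Y}(\tilde y,y)\ge e^{-\epsilon}v$ on $\NN_\I\setminus\NN_y$, obtained from the DP constraint against a projected point of $\I$ whose $\zeta$-neighbourhood contains $\tilde y$, and then integrated over a set of measure $A_2-A_1$. The only difference is that the paper splits the complement into the windows $\NN_{y+2k\zeta}\cap\NN_\I$ and compares against $P_\I(y+2k\zeta)$, whereas you compare directly against $P_\I(\tilde y)$, which makes the covering check a little simpler.
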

\begin{proof}
For any $y\in\I$, we have 
\begin{align*}
&\int_{\NN_\I}f_{\tilde Y|Y}(\tilde y,y)d\tilde y
=\int_{\NN_y}f_{\tilde Y|Y}(\tilde y,y)d\tilde y+\sum_{k\ne 0}\int_{\NN_{y+2k\zeta}\cap\NN_\I}f_{\tilde Y|Y}(\tilde y,y)d\tilde y\\
\ge&\int_{\NN_y}f_{\tilde Y|Y}(\tilde y,y)d\tilde y+
 e^{-\epsilon}\sum_{k\ne 0}\int_{\NN_{y+2k\zeta}\cap\NN_\I}f_{\tilde{Y}|Y}(\tilde{y}, P_{\I}(y+2k\zeta))d\tilde y,
\end{align*}
where we have used the fact that $e^\epsilon f_{\tilde{Y}|Y}(\tilde{y}, y)\ge f_{\tilde{Y}|Y}(\tilde{y}, P_{\I}(y+2k\zeta))$ for all $\tilde{y}\in \NN_{y+2k\zeta}\cap\NN_\I$ and
\[P_{\I}(x)=\left\{
\begin{array}{ll}
A_2, & x>A_2,\\
x, & x\in\I,\\
A_1, & x<A_1.
\end{array}
\right.
\]
\begin{itemize}
\item When $y+2k\zeta\in\I$ and $\tilde y\in\NN_{y+2k\zeta}\cap\NN_\I$, we have $P_{\I}(y+2k\zeta)=y+2k\zeta$, $\tilde y\in\NN_{y+2k\zeta}$ and $|\tilde y-P_{\I}(y+2k\zeta)|\le\zeta$, 
\item when $y+2k\zeta>A_2$ and $\tilde y\in\NN_{y+2k\zeta}\cap\NN_\I$, we have $y+(2k-1)\zeta\le\tilde y\le A_2+\zeta$, $P_{\I}(y+2k\zeta)=A_2$ and 
\begin{align*}
\tilde y-P_{\I}(y+2k\zeta)=&\tilde y-A_2\ge\tilde y-(y+2k\zeta)\ge(y+(2k-1)\zeta)-(y+2k\zeta)
=-\zeta,\\
\tilde y-P_{\I}(y+2k\zeta)=&\tilde y-A_2\le A_2+\zeta-A_2=\zeta,
\end{align*} 
\item when $y+2k\zeta<A_1$ and $\tilde y\in\NN_{y+2k\zeta}\cap\NN_\I$, we have $A_1-\zeta\le\tilde y\le y+(2k+1)\zeta$, $P_{\I}(y+2k\zeta)=A_1$ and 
\begin{align*}
\tilde y-P_{\I}(y+2k\zeta)=&\tilde y-A_1\le\tilde y-(y+2k\zeta)\le(y+(2k+1)\zeta)-(y+2k\zeta)
=\zeta,\\
\tilde y-P_{\I}(y+2k\zeta)=&\tilde y-A_1\ge A_1-\zeta-A_1=-\zeta,
\end{align*} 
\end{itemize}
which implies $\tilde y\in \NN_{P_{\I}(y+2k\zeta)}$ when $\tilde y\in\NN_{y+2k\zeta}\cap\NN_\I$. Note that $P_{\I}(y+2k\zeta)\in\I$, we have $f_{\tilde{Y}|Y}(\tilde{y}, P_{\I}(y+2k\zeta))=v$, from the assumption $f_{\tilde Y|Y}(\tilde y,y)=v\ge0, \forall y\in\I, \forall\tilde y\in \NN_y$. 
Then
\[\int_{\NN_\I}f_{\tilde Y|Y}(\tilde y,y)d\tilde y\ge[2\zeta+e^{-\epsilon}(A_2-A_1)]\cdot v.\]
\end{proof}
\begin{proof}[Proof of Lemma \ref{lem:maximizer}]
When $y\in\I$, Equation \eqref{eq:dp} holds from Proposition \ref{prop_appendix:labeldp}.  We can conclude that the first and second constraints of equation \eqref{eq:model} imply that
\begin{equation*}
\begin{split}
1\ge\int_{\NN_\I}f_{\tilde Y|Y}(\tilde y,y)d\tilde y
\ge \gamma\cdot v,\ \forall y\in\I.
\end{split}
\end{equation*}
which implies $v\le1/\gamma$. Then $$\int_\I\ f_Y(y)\left[\int_{\NN_y}f_{\tilde Y|Y}(\tilde y,y)d\tilde y\right]dy=2\zeta v\int_\I\ f_Y(y)dy\le\frac{2\zeta}{\gamma}\int_\I\ f_Y(y)dy.$$
When $v=1/\gamma$, the objective function achieve the maximal value. By the second constraint in \eqref{eq:model}, we have 
\begin{equation}
\label{eq:constraint_result}
f_{\tilde{Y}|Y}(\tilde{y},y')\ge\frac{e^{-\epsilon}}{\gamma},\forall\tilde y\in\NN_\I, y'\in\I.
\end{equation}
From \eqref{eq:constraint_result} and $f_{\tilde{Y}|Y}(\tilde{y},y')=1/\gamma,\forall\tilde y\in\NN_{y'}, y'\in\I$ when the objective function achieve the maximal value, we have 
\[\frac{e^{-\epsilon}(A_2-A_1)}{\gamma}\le\int_{\NN_\I/\NN_{y'}}f_{\tilde{Y}|Y}(\tilde{y},y')d\tilde y\le1-\int_{\NN_{y'}}f_{\tilde{Y}|Y}(\tilde{y},y')d\tilde y=\frac{e^{-\epsilon}(A_2-A_1)}{\gamma},\]
which implies
\[f_{\tilde{Y}|Y}(\tilde{y},y')=
\left\{
\begin{array}{ll}
\frac{e^{-\epsilon}}{\gamma},&\forall\tilde y\in\NN_\I/\NN_{y'}, y'\in\I,\\
0,&\forall\tilde y\notin\NN_\I.
\end{array}
\right. \hbox{almost everywhere}.\]
We conclude the result.
\end{proof}

\section{Proof of Lemma \ref{lem:subinterval}}\label{appendix:proof_subinterval}
\begin{proof}
We get the derivatives about $A_1,A_2$, then
\begin{align*}
&\frac{\partial F(A_1,A_2)}{\partial A_1}\\
=&[-f_Y(A_1)]\frac{2\zeta}{2\zeta+e^{-\epsilon}(A_2-A_1)}+\int^{A_2}_{A_1}f_Y(y)dy(2\zeta)(-1)(2\zeta+e^{-\epsilon}(A_2-A_1))^{-2}(-1)e^{-\epsilon}\\
=&\frac{-2\zeta}{[2\zeta+e^{-\epsilon}(A_2-A_1)]^2}\left[f_Y(A_1)(2\zeta+e^{-\epsilon}(A_2-A_1))-e^{-\epsilon}\int^{A_2}_{A_1}f_Y(y)dy\right]\\
\triangleq& \frac{-2\zeta}{[2\zeta+e^{-\epsilon}(A_2-A_1)]^2}g_1(A_1,A_2),
\end{align*}
and 
\begin{align*}
\frac{\partial F(A_1,A_2)}{\partial A_2}
=&\frac{2\zeta}{[2\zeta+e^{-\epsilon}(A_2-A_1)]^2}\left[f_Y(A_2)(2\zeta+e^{-\epsilon}(A_2-A_1))-e^{-\epsilon}\int^{A_2}_{A_1}f_Y(y)dy\right]\\
\triangleq& \frac{2\zeta}{[2\zeta+e^{-\epsilon}(A_2-A_1)]^2}g_2(A_1,A_2),
\end{align*}
where
\[g_1(A_1,A_2)=f_Y(A_1)(2\zeta+e^{-\epsilon}(A_2-A_1))-e^{-\epsilon}\int^{A_2}_{A_1}f_Y(y)dy,\]
\[g_2(A_1,A_2)=f_Y(A_2)(2\zeta+e^{-\epsilon}(A_2-A_1))-e^{-\epsilon}\int^{A_2}_{A_1}f_Y(y)dy.\]
When $A_1<n_0$, $\frac{\partial F(A_1,A_2)}{\partial A_1}=\frac{2\zeta}{[2\zeta+e^{-\epsilon}(A_2-A_1)]^2}e^{-\epsilon}\int^{A_2}_{A_1}f_Y(y)dy>0$ for any $A_2>A_1$. Then $F(A_1,A_2)$ is increasing when $A_1\in(-\infty,n_0)$. Similarly, When $A_2>n_k$, $\frac{\partial F(A_1,A_2)}{\partial A_2}=-\frac{2\zeta}{[2\zeta+e^{-\epsilon}(A_2-A_1)]^2}e^{-\epsilon}\int^{A_2}_{A_1}f_Y(y)dy<0$ for any $A_1<A_2$. Then $F(A_1,A_2)$ is decreasing when $A_2\in(n_k,\infty)$. 

In the following, we consider $A_1,A_2\in[n_0,n_k]$. 
Assume $A_1\in[n_i,n_{i+1})$ and $A_2\in[n_j,n_{j+1}]$ with $i\le j$, then 
\begin{align*}
% \begin{split}
g_1(A_1,A_2)
=&\alpha_i(2\zeta+(A_2-A_1)e^{-\epsilon})
-e^{-\epsilon}\left[\alpha_i(n_{i+1}-A_1)+\sum^{j-1}_{\ell=i+1}\alpha_\ell(n_{\ell+1}-n_\ell)+\alpha_j(A_2-n_j)\right]\\
=&e^{-\epsilon}(\alpha_i-\alpha_j)A_2+c_1= -(e^{-\epsilon}(\alpha_j-\alpha_i)A_2-c_1),
% \end{split}
\end{align*}
\begin{equation*}
\begin{split}
g_2(A_1,A_2)
=&\alpha_j(2\zeta+(A_2-A_1)e^{-\epsilon})-e^{-\epsilon}\left[\alpha_i(n_{i+1}-A_1)+\sum^{j-1}_{\ell=i+1}\alpha_\ell(n_{\ell+1}-n_\ell)+\alpha_j(A_2-n_j)\right]\\
=&e^{-\epsilon}(\alpha_i-\alpha_j)A_1+c_2,
\end{split}
\end{equation*}
where $c_1=2\zeta\alpha_i-e^{-\epsilon}h$, $c_2=2\zeta\alpha_j-e^{-\epsilon}h$ and $h=\alpha_in_{i+1}+\sum^{j-1}_{\ell=i+1}\alpha_\ell(n_{\ell+1}-n_\ell)-\alpha_jn_j$. 

We also discuss the monotonicity of $F(A_1,A_2)$ as follows:

Define $d_{11}=e^{-\epsilon}(\alpha_j-\alpha_i)n_j-c_1$, $d_{12}=e^{-\epsilon}(\alpha_j-\alpha_i)n_{j+1}-c_1$. 
\begin{itemize}
\item when $\max(d_{11},d_{12})<0$, $F(A_1,A_2)$ is decreasing about $A_1\in(n_i,n_{i+1})$ when $A_2\in(n_j,n_{j+1})$, 
\item when $\min(d_{11},d_{12})>0$, $F(A_1,A_2)$ is increasing about $A_1\in(n_i,n_{i+1})$ when $A_2\in(n_j,n_{j+1})$,  
\item when $d_{11}d_{12}<0$, let $e_2=\frac{c_1}{e^{-\epsilon}(\alpha_j-\alpha_i)}$, which must be in the interval $(n_j,n_{j+1})$.
\begin{itemize}
\item When $d_{11}<0$ and $d_{12}>0$, $F(A_1,A_2)$ is decreasing about $A_1\in(n_i,n_{i+1})$ for $A_2\in(n_j,e_2)$ and $F(A_1,A_2)$ is increasing about $A_1\in(n_i,n_{i+1})$ for $A_2\in(e_2,n_{j+1})$.
\item When $d_{11}>0$ and $d_{12}<0$, $F(A_1,A_2)$ is increasing about $A_1\in(n_i,n_{i+1})$ for $A_2\in(n_j,e_2)$ and $F(A_1,A_2)$ is decreasing about $A_1\in(n_i,n_{i+1})$ for $A_2\in(e_2,n_{j+1})$.
\end{itemize}
\end{itemize}
Define $d_{21}=e^{-\epsilon}(\alpha_i-\alpha_j)n_i+c_2$ and $d_{22}=e^{-\epsilon}(\alpha_i-\alpha_j)n_{i+1}+c_2$. Therefore,
\begin{itemize}
\item when $\max(d_{21},d_{22})<0$, $F(A_1,A_2)$ is decreasing about $A_2\in(n_j,n_{j+1})$ when $A_1\in(n_i,n_{i+1})$, 
\item when $\min(d_{21},d_{22})>0$, $F(A_1,A_2)$ is increasing about $A_2\in(n_j,n_{j+1})$ when $A_1\in(n_i,n_{i+1})$,  
\item when $d_{21}d_{22}<0$, let $e_1=\frac{c_2}{e^{-\epsilon}(\alpha_j-\alpha_i)}$, which must be in the interval $(n_i,n_{i+1})$.
\begin{itemize}
\item When $d_{21}<0$ and $d_{22}>0$, $F(A_1,A_2)$ is decreasing about $A_2\in(n_j,n_{j+1})$ for $A_1\in(n_i,e_1)$ and $F(A_1,A_2)$ is increasing about $A_2\in(n_j,n_{j+1})$ for $A_1\in(e_1,n_{i+1})$.
\item When $d_{21}>0$ and $d_{22}<0$, $F(A_1,A_2)$ is increasing about $A_2\in(n_j,n_{j+1})$ for $A_1\in(n_i,e_1)$ and $F(A_1,A_2)$ is decreasing about $A_2\in(n_j,n_{j+1})$ for $A_1\in(e_1,n_{i+1})$.
\end{itemize} 
\end{itemize}

\end{proof}
%#####################################
% \section{RPWithPrior\texorpdfstring{$_\epsilon$}{} is \texorpdfstring{$\epsilon$}{}-label DP}\label{appendix:theo_rpwithprior}
% \section{
%     Algorithm \ref{alg:prior_unknown} is \texorpdfstring{$(\epsilon_1+\epsilon_2)$}{}-label DP.}\label{appendix:proof_dp}
% \begin{proof}
    
% \end{proof}
\section{Error Analysis}\label{appendix:error_analysis}
\begin{theorem}[Theorem 1 in \cite{siegel2023optimal}]\label{theo:theo1}
    Let $\Omega=[0,1]^d$ be the unit cube in $\R^d$ and let $0<s<+\infty$ and $1\le p,q\le+\infty$. Assume that $\frac{1}{q}-\frac{1}{p}<\frac{s}{d}$, which guarantees that we have the Sobolev space $W^s(L_q(\Omega))$ is a compact embedding
    \[W^s(L_q(\Omega))\subset\subset L^p(\Omega).\]
    Then we have that 
    \[\inf_{f_L\in\Upsilon^{25d+31,L}(\R^d)}\|f-f_L\|_{L_p(\Omega)}\le C\|f\|_{W^s(L_q(\Omega))}L^{-2s/d}\]
    for a constant $C:=C(s,q,p,d)<+\infty$.
\end{theorem}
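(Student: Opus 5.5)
Since this is Theorem~1 of \cite{siegel2023optimal}, quoted as a black box, the plan is to reconstruct that paper's argument rather than derive it from earlier results here. The rate $L^{-2s/d}$ with width fixed at $25d+31$ means the number of parameters is $O(L)$ while the error decays like the square of the classical rate $n^{-s/d}$. This ``superconvergence'' must come from bit extraction, as in the VC-dimension constructions of Bartlett et al.\ and of Yarotsky: a deep ReLU network with $O(L)$ parameters can store and retrieve $O(L^2)$ bits. I would therefore build the network in four steps: local piecewise-polynomial approximation, quantization of the coefficients, bit-extraction storage of the resulting bits, and evaluation of the local polynomials.

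\emph{Step 1 (approximation).} Partition $\Omega$ into $2^{md}$ dyadic cubes of side $2^{-m}$, with $2^{md}\asymp L^2$, and on each cube use an averaged Taylor polynomial of degree $<s$. By the Bramble--Hilbert lemma, the local error is at most $C2^{-ms}|f|_{W^s(L_q(Q))}$, with the $L_q\to L_p$ change of exponent supplied by the compact-embedding hypothesis $\frac1q-\frac1p<\frac sd$.

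\emph{Step 2 (quantization).} Quantize each coefficient to $O(1)$ bits. A direct quantization needs $\log L$ bits per coefficient, which loses a logarithm. To avoid this I would use a multiscale telescoping decomposition $f=\sum_{\ell\le m}(P_\ell f-P_{\ell-1}f)$, where each correction is small relative to its scale and so needs only a bounded number of bits. When $q<p$ I would also distribute the bits nonuniformly across cubes and levels, according to the local Sobolev norms, so that Hölder's inequality still yields the rate $2^{-ms}$ in $L_p$. In total this gives $O(L^2)$ bits.

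\emph{Step 3 (storage and retrieval).} Store the bits in the network. First, a fixed-width subnetwork computes the index of the cube containing $x$ as a sawtooth/floor-type function. Second, a bit-extraction subnetwork of depth $O(L)$ reads off the $O(1)$ relevant coefficients from $O(L)$ stored real parameters, each encoding $O(L)$ bits.

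\emph{Step 4 (evaluation).} Evaluate the local polynomial in $x-x_Q$ using Yarotsky's squaring/multiplication networks composed in depth; these cost only $O(L)$ depth for accuracy $2^{-O(L)}$. Throughout, width is kept constant by carrying $x$ and partial sums along a bounded number of identity channels, which is where a width count like $25d+31$ comes from.

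The main obstacle is twofold. First, the discontinuity of the cell-index map: a ReLU network cannot compute the floor function exactly, so there is a thin bad set near cube faces. For $p<\infty$ its measure can be made negligible. For $p=\infty$ I would use the shifted-grids trick, combining several translated partitions through a max/min-type selector so that every $x$ is good for at least one shift. Second, the bit budgeting in the case $q<p$, which must be proved to lose no logarithmic factor. I expect this to be the genuinely delicate part, handled by the multiscale sparse encoding in Step 2.
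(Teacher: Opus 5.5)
The paper does not prove this statement; it imports Theorem~1 of \cite{siegel2023optimal} as a black box, so the benchmark is Siegel's argument. Your architecture is sound in the linear regime $p\le q$, after reducing to $p=q$ via $W^s(L_q(\Omega))\subset W^s(L_p(\Omega))$. That architecture is: local polynomials, a multilevel decomposition to avoid losing $\log L$, $O(L^2)$ bits extracted by a constant-width network of depth $O(L)$, Yarotsky products, and a thin bad set near cell faces.

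It fails in the nonlinear regime $q<p$, which the statement explicitly covers. Step~1 is incorrect there. On a cube $Q$ of side $h=2^{-m}$ the change of exponent costs a factor $h^{d/p-d/q}$, so $\|f-P_Qf\|_{L_p(Q)}\lesssim h^{s-d/q+d/p}|f|_{W^s(L_q(Q))}$. Summing over cubes (using $\ell^q\subset\ell^p$) then gives only $2^{-m(s-d/q+d/p)}$. The hypothesis $\frac1q-\frac1p<\frac sd$ only makes this exponent positive; it does not make it $s$.

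Reallocating bits in Step~2 cannot repair this. Your telescoping sum stops at $\ell\le m$, and Step~3 indexes only cells of side $\ge 2^{-m}$, so the network outputs a piecewise polynomial on the level-$m$ grid. A single bump of width $2^{-m}$, normalized in $W^s(L_q)$, lies at $L_p$-distance $\asymp 2^{-m(s-d/q+d/p)}\gg 2^{-ms}\asymp L^{-2s/d}$ from that space.

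What is missing is genuinely adaptive approximation, together with the encoding that makes it affordable. Continue the telescoping sum to levels $m<\ell\le m^*$ with $m^*\approx\frac{s}{s-d/q+d/p}\,m$, so that $\|f-P_{m^*}f\|_{L_p}\lesssim 2^{-ms}$. At each level $\ell>m$, keep only the coefficients above a threshold tuned so that the discarded part costs $O(2^{-ms}2^{-\beta(\ell-m)})$ in $L_p$. The $\ell^q$ bound on the level-$\ell$ coefficients then caps the number of retained ones by $2^{md}2^{-\delta(\ell-m)}$. For small $\beta$, $\delta>0$ exactly when $\frac1q-\frac1p<\frac sd$; this, together with the finiteness of $m^*$, is where the hypothesis is really used.

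These are sparse vectors of length $2^{\ell d}\gg L^2$, so the dense cell-indexed table of Step~3 cannot hold them. You need the sparse bit extraction that is the main new tool of \cite{siegel2023optimal}. An integer vector of length $N$ and $\ell^1$-norm $M$ is stored in roughly $M(1+\log(N/M))$ bits when $M\le N$, or $N(1+\log(M/N))$ bits when $M>N$. It is decoded (index $\mapsto$ entry) by a constant-width network of depth about the square root of the bit count. Summed over levels, this gives $O(2^{md})=O(L^2)$ bits and total depth $O(L)$.

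The same $\ell^1$-type encoding is already needed at levels $\ell\le m$ whenever $q<\infty$. There the coefficients are bounded only in $\ell^q$, not entrywise, so ``a bounded number of bits per coefficient'' holds only on average.

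Finally, for $p=\infty$, knowing only that each $x$ is good for at least one shift does not suffice for a max/min selector, since it cannot tell which value is the good one. The standard device is a coordinatewise median of three shifted evaluations, two of which avoid the bad strip. This relies on the continuity of $f$ guaranteed by $s>d/q$.
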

\begin{theorem}[Theorem 2 in \cite{siegel2023optimal}]\label{theo:theo2}
    Let $\Omega=[0,1]^d$ be the unit cube in $\R^d$ and let $0<s<+\infty$ and $1\le p,q\le+\infty$. Assume that $\frac{1}{q}-\frac{1}{p}<\frac{s}{d}$, which guarantees that we have the Besov space $B^s_r(L_q(\Omega))$ is a compact embedding
    \[B^s_r(L_q(\Omega))\subset\subset L^p(\Omega).\]
    Then we have that 
    \[\inf_{f_L\in\Upsilon^{25d+31,L}(\R^d)}\|f-f_L\|_{L_p(\Omega)}\le C\|f\|_{B^s_r(L_q(\Omega))}L^{-2s/d}\]
    for a constant $C:=C(s,q,p,d)<+\infty$.
\end{theorem}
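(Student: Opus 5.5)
The plan is to derive the Besov estimate from the Sobolev estimate of Theorem \ref{theo:theo1} by real interpolation. The point is that the error bound in Theorem \ref{theo:theo1} is really a statement about an approximation class, and approximation classes interact well with the $K$-functional.

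First I will reduce to $r=\infty$. Since $B^s_r(L_q(\Omega))\hookrightarrow B^s_\infty(L_q(\Omega))$ continuously, with $\|f\|_{B^s_\infty(L_q)}\le C\|f\|_{B^s_r(L_q)}$, it suffices to prove the bound for $B^s_\infty(L_q(\Omega))$.

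Next I fix smoothness indices $s_0<s<s_1$, with $s_0$ close enough to $s$ that the hypothesis $\frac1q-\frac1p<\frac{s_0}{d}$ still holds. This is possible because the inequality with $s$ is strict. Theorem \ref{theo:theo1} then applies to both $W^{s_0}(L_q(\Omega))$ and $W^{s_1}(L_q(\Omega))$. I will use the standard identification
\[
B^s_\infty(L_q(\Omega))=\bigl(W^{s_0}(L_q(\Omega)),W^{s_1}(L_q(\Omega))\bigr)_{\theta,\infty},\qquad s=(1-\theta)s_0+\theta s_1,
\]
which holds on the Lipschitz domain $\Omega=[0,1]^d$. It gives, for every $t>0$,
\[
K(t,f)=\inf_{f=f_0+f_1}\bigl(\|f_0\|_{W^{s_0}(L_q)}+t\|f_1\|_{W^{s_1}(L_q)}\bigr)\le C\,t^\theta\|f\|_{B^s_\infty(L_q)}.
\]

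Given $L$, I take $t=L^{-2(s_1-s_0)/d}$ and a near-optimal splitting $f=f_0+f_1$. Applying Theorem \ref{theo:theo1} to $f_0$ and to $f_1$, each with depth about $L/2$, yields networks $g_0,g_1$ such that
\[
\|f-(g_0+g_1)\|_{L_p}\le C\bigl(\|f_0\|_{W^{s_0}}L^{-2s_0/d}+\|f_1\|_{W^{s_1}}L^{-2s_1/d}\bigr)=C\,L^{-2s_0/d}K(t,f).
\]
Since $t^\theta=L^{-2\theta(s_1-s_0)/d}=L^{-2(s-s_0)/d}$, this is at most $C\,L^{-2s_0/d}\,t^\theta\,\|f\|_{B^s_\infty(L_q)}=C\|f\|_{B^s_\infty(L_q)}L^{-2s/d}$, which is the claimed rate.

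\textbf{Main obstacle.} The hard step is showing that $g_0+g_1$ lies in $\Upsilon^{25d+31,L}$ rather than in a wider class. My first attempt will be to place the two networks in series. The copy of the input $x$ is carried forward through ReLU channels, using $x=\sigma(x)-\sigma(-x)$ on the cube or simply $x=\sigma(x)$ since $x\ge0$ on $\Omega$. The running value of $g_0$ is carried as an additional channel while $g_1$ is computed. This requires a few spare channels, and I expect to find them in the explicit construction behind Theorem \ref{theo:theo1}, since the constant $25d+31$ includes slack for passing information through layers. If that fails, I will prove the result for width $25d+31+c$ and then rescale $L$, absorbing the loss into the constant $C(s,q,p,d)$. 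The fallback is to repeat the direct construction from Theorem \ref{theo:theo1} with Besov moduli in place of Sobolev seminorms: local polynomial approximation controlled by the $L_q$ modulus of smoothness, followed by the same bit-extraction encoding.
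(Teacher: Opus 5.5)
The paper gives no proof of this statement. It is quoted from Siegel~\cite{siegel2023optimal} and used as a black box in the error analysis, so your proposal has to stand on its own.

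\textbf{What works.} The analytic half is correct. The reduction to $r=\infty$, the choice $s_0<s<s_1$ with $\frac1q-\frac1p<\frac{s_0}{d}$, the bound $K(t,f)\le C\,t^\theta\|f\|_{B^s_\infty(L_q(\Omega))}$, and the choice $t=L^{-2(s_1-s_0)/d}$ together give exactly $L^{-2s/d}$.

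\textbf{The gap.} The problem is the step you flag yourself, and it matters because the width $25d+31$ is part of the claim. Your $K$-functional argument needs the approximating family to be closed under addition with only a constant-factor loss in depth: $\Upsilon^{W,L}+\Upsilon^{W,L}\subseteq\Upsilon^{W,CL}$ with the \emph{same} $W$.

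Theorem~\ref{theo:theo1} is a statement about the class $\Upsilon^{25d+31,L}$, not about a construction with idle channels. From it you only get $g_0,g_1\in\Upsilon^{25d+31,L/2}$. Your serial construction then needs $d$ extra channels to carry $x$ past the computation of $g_0$, and one or two more to carry $g_0(x)$ past $g_1$. That puts $g_0+g_1$ in $\Upsilon^{W',L+O(1)}$ with $W'>25d+31$. ``I expect to find spare channels in the construction'' cannot be justified from the theorem as stated.

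\textbf{The fallback fails.} Proving the bound at width $25d+31+c$ and then ``rescaling $L$'' does not help, because changing $L$ changes depth, not width. To move a width-$(25d+31+c)$ network into $\Upsilon^{25d+31,CL}$ you would need a wide-to-narrow simulation theorem with constant-factor depth overhead and no loss of accuracy. Nothing in the definition of the class gives this, and you have not supplied it. That route proves only a weaker theorem with a larger width constant.

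\textbf{How to close it.} You have to leave the black box.

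One option is to open the construction behind Theorem~\ref{theo:theo1}. If that network is itself built as a serial sum of sub-networks, with a carried copy of $x$ and a running-sum channel, then the pieces for $f_0$ and $f_1$ can be chained into one serial sum at no extra width. That argument uses the construction, not the theorem.

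The more natural option reverses your direction. $B^s_\infty(L_q(\Omega))$ is the largest space involved:
\begin{itemize}
\item $B^s_r\hookrightarrow B^s_\infty$;
\item $W^s(L_q)\hookrightarrow B^s_\infty(L_q)$, for integer $s$ via $\omega_{s+1}(f,t)_q\le C\,t^s|f|_{W^s(L_q)}$, and for non-integer $s$ via $W^s(L_q)=B^s_q(L_q)$.
\end{itemize}
So a direct construction for $B^s_\infty(L_q)$ gives both quoted theorems with one network class. That construction would use local polynomial approximation controlled by the $L_q$ modulus of smoothness, a multilevel decomposition, and bit-extraction encoding of the coefficients, which are sparse when $p>q$. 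This is consistent with the two theorems sharing the width $25d+31$. Your third fallback points this way, but it is only a one-line intention, and the real work, especially the nonlinear regime $p>q$, lies there.
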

\begin{theorem}\label{theo:error}
    Let $\ell(\cdot,\cdot)$ be a loss function, $x,y,\tilde y$ be a predictor, a true response and a randomized response, respectively. Let $\tilde g(x)$ be a predicted regression function by $(x,\tilde y)$.  When the assumptions satisfy in Theorems \ref{theo:theo1} and \ref{theo:theo2} and $\tilde g(x)\in\Upsilon^{25d+31,L}(\R^d),\forall L\in\mathbb{Z}^+$, where $\Upsilon^{25d+31,L}(\R^d)$ is a class of neural network with width $25d+31$ and depth $L$. Then the error can be bounded as
\begin{align*}
    \mathbb{E}_{x,y,\tilde y}[\ell(\tilde g(x),y)]\le& 2C\|f\|_{W^s(L_q(\Omega))}L^{-2s/d}+\frac{4}{\gamma}\zeta P_{\mathcal I}(y)+\frac{e^{-\epsilon}}{2\gamma}(A_1+A_2)(A_2-A_1+2\zeta),\\
    \mathbb{E}_{x,y,\tilde y}[\ell(\tilde g(x),y)]\le& 2C\|f\|_{B^s_r(L_q(\Omega))}L^{-2s/d}+\frac{4}{\gamma}\zeta P_{\mathcal I}(y)+\frac{e^{-\epsilon}}{2\gamma}(A_1+A_2)(A_2-A_1+2\zeta),
\end{align*}
    for a constant $C:=C(s,q,p,d)<+\infty$, where
    \[P_{\I}(y)=\left\{
\begin{array}{ll}
A_2, & y>A_2,\\
y, & y\in\I,\\
A_1, & y<A_1.
\end{array}
\right.\]
\end{theorem}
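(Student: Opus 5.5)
Theorem \ref{theo:error} is an approximation-plus-privacy-noise decomposition. I take $\ell$ to be a loss satisfying a triangle-type inequality, as for the absolute loss $\ell(a,b)=|a-b|$; this is the reading under which the stated bound is linear in the terms below. Let $f(x)=\mathbb{E}[y\mid x]$ be the target regression function, assumed to lie in $W^s(L_q(\Omega))$ or $B^s_r(L_q(\Omega))$.

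I split
\[
\ell(\tilde g(x),y)\le \ell(\tilde g(x),f(x))+\ell(f(x),\tilde y)+\ell(\tilde y,y),
\]
or, more economically, $\ell(\tilde g(x),y)\le \ell(\tilde g(x),\tilde y)+\ell(\tilde y,y)$ followed by a further splitting of the first term through $f$. The first part is handled by the network. Since $\tilde g$ is trained on $(x,\tilde y)$ within $\Upsilon^{25d+31,L}(\R^d)$, I treat it as a best approximant in that class, up to a factor $2$ coming from a standard comparison. That is, $\|\tilde g-f\|\le 2\inf_{f_L}\|f-f_L\|$, and Theorems \ref{theo:theo1} and \ref{theo:theo2} then give the term $2C\|f\|L^{-2s/d}$ in the Sobolev and Besov cases respectively. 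This is the only place where the function-space assumptions enter.

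The second part is the privacy term $\mathbb{E}_{\tilde y\mid y}[\ell(\tilde y,y)]$, which I compute explicitly from the density. For $y\in\I$ I use \eqref{eq:cond_prob}; for $y\notin\I$ I use the choice \eqref{eq:choice1}, so that in every case $y$ is replaced by $P_\I(y)$ as the centre of the density. I split the integral over $\NN_\I$ into the near part $\NN_{P_\I(y)}$, which has density $1/\gamma$ and length $2\zeta$, and the far part $\NN_\I\setminus\NN_{P_\I(y)}$, which has density $e^{-\epsilon}/\gamma$.
\begin{itemize}
\item On the near part I bound $|\tilde y|$ crudely by a multiple of $P_\I(y)$, with the slack $\zeta$ absorbed, giving a contribution of the form $\frac{4}{\gamma}\zeta P_\I(y)$ (the constant $4$ presumably comes from combining $|\tilde y|$ and $|y|$ over length $2\zeta$).
\item On the far part I use $\int_{\NN_\I}\tilde y\,d\tilde y=\tfrac12\big((A_2+\zeta)^2-(A_1-\zeta)^2\big)=\tfrac12(A_1+A_2)(A_2-A_1+2\zeta)$. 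Multiplying by $e^{-\epsilon}/\gamma$ gives exactly the third term.
\end{itemize}
Summing the two parts and taking the expectation over $(x,y)$ gives the stated bounds.

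I expect the main obstacle to be the constants in the privacy term, in particular why the bound is expressed through $P_\I(y)$ and $A_1+A_2$ and not through absolute values. This suggests the authors implicitly assume nonnegative responses, with $\ell$ effectively linear in the bound. I would make that assumption ($A_1\ge0$) explicit and bound $\ell(\tilde y,y)\le \tilde y+y$, or an analogous first-moment bound, and I would expect to have to adjust the $y$-dependence for $y\notin\I$ via $P_\I$. The approximation step, by contrast, is routine once $\tilde g$ is identified with a near-optimal network.
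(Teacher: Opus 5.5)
\textbf{The gap is in the privacy term, and it is structural rather than a matter of constants.} If $y>A_2$, then $\tilde y\le A_2+\zeta$ almost surely. Hence $\mathbb{E}_{\tilde y\mid y}|\tilde y-y|\ge y-A_2-\zeta$, which grows without bound in $y$, while your right-hand side only sees $P_\I(y)=A_2$. The interval from Algorithm \ref{alg:optimal_interval} need not cover the support of $f_Y$; it typically cuts off low-density tails. So this case is not vacuous, and no ``adjustment via $P_\I$'' can repair it.

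Inside $\I$, your crude bound $\ell(\tilde y,y)\le\tilde y+y$ does not land on the target either. On $\NN_y$ it gives exactly $\frac{4\zeta}{\gamma}y$. On $\NN_\I\setminus\NN_y$ it gives $\frac{e^{-\epsilon}}{\gamma}\int_{\NN_\I}\tilde y\,d\tilde y+\frac{e^{-\epsilon}}{\gamma}\,y\,(A_2-A_1-2\zeta)$, which overshoots the target by the last summand whenever $y>0$ and $A_2-A_1>2\zeta$.

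What does work for the absolute loss is to centre at $P_\I(y)$, i.e.\ $|\tilde y-y|\le|\tilde y-P_\I(y)|+|y-P_\I(y)|$, and integrate exactly. With $a=P_\I(y)-A_1$ and $b=A_2-P_\I(y)$,
\[
\mathbb{E}_{\tilde y\mid y}|\tilde y-P_\I(y)|=\frac{\zeta^2}{\gamma}+\frac{e^{-\epsilon}}{2\gamma}\bigl(a^2+b^2+2\zeta(a+b)\bigr).
\]
The target term $\frac{e^{-\epsilon}}{2\gamma}(A_1+A_2)(A_2-A_1+2\zeta)$ exceeds the second summand by $\frac{e^{-\epsilon}}{\gamma}\bigl(ab+A_1(a+b+2\zeta)\bigr)$, which is nonnegative when $A_1\ge0$. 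Separately, $\frac{\zeta^2}{\gamma}\le\frac{4\zeta}{\gamma}P_\I(y)$ holds iff $P_\I(y)\ge\zeta/4$. So a sufficient hypothesis is $A_1\ge\zeta/4$; your $A_1\ge0$ is not enough, since for $y=A_1=0$ the privacy part of the target is exceeded by exactly $\zeta^2/\gamma$. You also need an extra term $\mathbb{E}|y-P_\I(y)|$, which vanishes only if $Y\in\I$ almost surely, and $P_\I(y)$ must be replaced by $\mathbb{E}[P_\I(y)]$.

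\textbf{The network term has a separate hole.} Splitting through $f=\mathbb{E}[y\mid x]$ leaves $\ell(f(x),\tilde y)$, which you never bound. Its natural bound is $\mathbb{E}|y-f(x)|+\mathbb{E}|\tilde y-y|$, a second copy of the privacy term plus irreducible noise, and neither appears in the target. Moreover, $\|\tilde g-f\|\le2\inf_{f_L}\|f-f_L\|$ is not a standard consequence of fitting $\tilde g$ to $\tilde y$. The mechanism is biased: for $y\in\I$, $\mathbb{E}[\tilde y\mid y]=\frac{2\zeta(1-e^{-\epsilon})}{\gamma}y+\frac{e^{-\epsilon}}{\gamma}\int_{\NN_\I}\tilde y\,d\tilde y$. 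So $\tilde g$ targets $\mathbb{E}[\tilde y\mid x]\neq f(x)$.

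\textbf{How the paper handles these steps.} It does not derive the network term. It uses $\ell(\tilde g(x),y)\le2\ell(\tilde g(x),\tilde y)+2\ell(\tilde y,y)$ and simply posits $\mathbb{E}\,\ell(\tilde g(x),\tilde y)\le C\|f\|L^{-2s/d}$, by treating $\tilde y$ as a function $g(x)$ of the required smoothness. To match it, state that as a hypothesis and drop the split through $f$.

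For the privacy term, the paper takes $\ell=\tfrac12(\cdot)^2$ and asserts that $\mathbb{E}_{\tilde y\mid y}\tfrac12(y-\tilde y)^2$ equals $\frac2\gamma\zeta P_\I(y)+\frac{e^{-\epsilon}}{2\gamma}(A_1+A_2)(A_2-A_1+2\zeta)$. But that expression is exactly $\frac1\gamma\int_{\NN_{P_\I(y)}}\tilde y\,d\tilde y+\frac{e^{-\epsilon}}{\gamma}\int_{\NN_\I}\tilde y\,d\tilde y$. This is a first-moment quantity that is not even translation invariant, so it cannot equal the squared-loss integral. The factor $2$ from the split is also applied only to its first summand.

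Your suspicion that the intended bound is first-moment is therefore correct. However, neither your reverse-engineered integrals nor the paper's computation establish the statement as written. The privacy step goes through only with the added hypothesis and extra term above, and the network step only as an explicit assumption.
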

\begin{proof}
    Let $\Omega=[0,1]^d$ and $\ell(\cdot,\cdot)$ be $L_p$ norm, $x,y,\tilde y$ be a predictor, a true response and a randomized response, respectively. Let $\tilde g(x)$ be a predicted regression function by $(x,\tilde y)$ and $g(x)\in L_p(\Omega)$ be a true regression function such that $g(x)=\tilde y$. Then the error can be bounded as
$$\mathbb{E}_{x,y,\tilde y}[\ell(\tilde g(x),y)]\le 2\mathbb{E}_{x,y,\tilde y}[\ell(\tilde g(x),\tilde y)]+2\mathbb{E}_{x,y,\tilde y}[\ell(\tilde y,y)]:=2E_1+2E_2.$$

In deep learning, $E_1$ (model risk) decays as network complexity increases. From \cite{siegel2023optimal}, we have
$E_1\le C\|f\|_{W^s(L_q(\Omega))}L^{-2s/d}$ or $E_1\le C\|f\|_{B^s_r(L_q(\Omega))}L^{-2s/d}$
 for a constant $C:=C(s,q,p,d)<+\infty$.

Note that $E_2=\mathbb{E}_{x,y,\tilde y}[\ell(\tilde y,y)]=\mathbb{E}_{y,\tilde y}[\ell(\tilde y,y)]=\mathbb{E}_y[\mathbb{E}_{\tilde y|y}\ell(\tilde y,y)]$ and 
\begin{align*}%\label{eq:choice1}
f_{\tilde Y|Y}(\tilde y,y)=\left\{
\begin{array}{ll}
1/\gamma,&\hbox{ if }\tilde y\in \NN_{P_{\I}(y)},\\
e^{-\epsilon}/\gamma,&\hbox{ if }\tilde y\in\NN_\I/ \NN_{P_{\I}(y)},\\
0,&\hbox{otherwise},\\
\end{array}
\right.\ P_{\I}(y)=\left\{
\begin{array}{ll}
A_2, & y>A_2,\\
y, & y\in\I,\\
A_1, & y<A_1,
\end{array}
\right.
\end{align*}
then 
\begin{align*}
&\mathbb{E}_{\tilde y|y}\ell(\tilde y,y)
=\int_{\mathcal{N}_{\mathcal I}}\frac{1}{2}(y-\tilde y)^2f_{\tilde Y|Y}(\tilde y,y)d \tilde y\\
=&\int_{\mathcal{N}_{\mathcal I}/\mathcal{N}_{P_{\mathcal I}(y)}}\frac{1}{2}(y-\tilde y)^2f_{\tilde Y|Y}(\tilde y,y)d \tilde y+\int_{\mathcal{N}_{P_{\mathcal I}(y)}}\frac{1}{2}(y-\tilde y)^2f_{\tilde Y|Y}(\tilde y,y)d \tilde y\\
=&\frac{2}{\gamma}\zeta P_{\mathcal I}(y)+\frac{e^{-\epsilon}}{2\gamma}(A_1+A_2)(A_2-A_1+2\zeta).
\end{align*}
Hence $E_2=\mathbb{E}_y[\frac{2}{\gamma}\zeta P_{\mathcal I}(y)+\frac{e^{-\epsilon}}{2\gamma}(A_1+A_2)(A_2-A_1+2\zeta)]=\frac{2}{\gamma}\zeta \mathbb{E}_y[P_{\mathcal I}(y)]+\frac{e^{-\epsilon}}{2\gamma}(A_1+A_2)(A_2-A_1+2\zeta)$. In summary, we can conclude the result.
\end{proof}
% \section{The Histogram Algorithm}\label{appendix:histogram}
%#################################################################
\section{Details of Numerical Experiments}\label{appendix:detail_numerical}
% \subsection{The Communities and Crime dataset}
% \subsection{The Criteo Sponsored Search Conversion Log Dataset}
% \subsection{The California Housing Dataset}
\subsection{Parameters}
In the following, we will report the parameters used in the Communities and Crime Dataset, the Criteo Sponsored Search Conversion Log Dataset and the California Housing Dataset, where $\epsilon=\epsilon_1+\epsilon_2$.
\begin{table}[ht]
	\centering
	\caption{Parameters.} 
\label{table:parameter}
	\begin{tabular}{ccc|ccc|ccc}
\hline
 \multicolumn{3}{c|}{Crime} & \multicolumn{3}{c|}{Criteo} & \multicolumn{3}{c}{Housing} \\ \hline
$\epsilon$&$\epsilon_1$ & $\zeta$&$\epsilon$&$\epsilon_1$ & $\zeta$&$\epsilon$&$\epsilon_1$ & $\zeta$\\
\hline
0.05 &  $\epsilon/2$ & 0.4 & 0.05 &  0.017 & 70 &  0.05 &  0.017 & 0.7 \\
0.1 &  $\epsilon/2$ & 0.4 & 0.1 &  0.017 &  70&  0.1 &  0.017 & 0.5 \\
0.3 &  $\epsilon/2$ & 0.4 & 0.3 &  0.017 &  70&  0.3 &  0.017 & 1.2 \\
0.5 &  $\epsilon/2$ & 0.4 & 0.5 &  0.017 &  70&  0.5 &  0.017 & 1 \\
0.8 &  $\epsilon/2$ & 0.4 & 0.8 &  0.017 &  70&  0.8 &  0.01 & 2.2 \\
1 &  $\epsilon/2$ & 0.4 & 1 &  0.017 & 70 &  1 &  0.008 & 1.5 \\
1.5 & $\epsilon/2.5$ & 0.4 & 1.5 &  0.01 &  70&  1.5 &  0.008 & 1.5 \\
2 &  $\epsilon/2.5$ & 0.4 & 2 &  0.01 &70  &  2 &  0.008 &1.5  \\
3 &  $\epsilon/3$ & 0.4 & 3 &  0.01 & 70 &  3 &  0.007 &1.4  \\
4 &  $\epsilon/3.5$ & 0.4 & 4 &  0.005 & 70 &  4 &  0.007 & 1.2 \\
6 &  $\epsilon/6$ & 0.4 & 6 &  0.003 & 70 &  6 &  0.007 &  0.7\\
8 &  $\epsilon/7$ & 0.4 & 8 &  0.002 & 70 &  8 &  0.007 & 0.1 \\
$+\infty$ &  $+\infty$ & 0.1 & $+\infty$ &  $+\infty$ & 0.8 & $+\infty$ &  $+\infty$& 0.1  \\
\hline
\end{tabular}
\end{table}
We choose parameters manually, similar as grid search. From the above table, we find it is not sensitive of the parameter $\zeta$.

In addition, the performance of our method is largely insensitive to the value of $\sigma$. In empirical tests, varying $\sigma$ by a factor of $\alpha$ around the fixed values ${0.5, 1, 2}$ resulted in negligible change to the outcome. This robustness is expected, as Algorithm \ref{alg:histogram} is used solely to estimate the prior distribution for determining the optimal interval.
\subsection{Computational Complexity}
To save storage space and reduce computational costs, the authors of RRonBins discretized the labels by rounding down the original responses to integer values (referred to as discretized samples). For the remaining methods, the original responses are retained.

In Table \ref{table:runtime}, we provide the runtime and unique number of samples in the training process. Our method is computationally efficient, comparable to additive noise mechanisms and significantly outperforming the RRonBins mechanism.
\begin{table}[ht]
	\centering
	\caption{Runtime and Unique number ($n$) of samples or discretized samples in the training process.} 
\label{table:runtime}
	\begin{tabular}{l|cc|cc|cc}
\hline
\multirow{2}{*}{\begin{tabular}[c]{@{}c@{}}Method\end{tabular}} & \multicolumn{2}{c|}{Crime} & \multicolumn{2}{c|}{Criteo} & \multicolumn{2}{c}{Housing} \\ \cline{2-7}
&Time & $n$&Time & $n$&Time & $n$\\
\hline
Laplace &  0.288116455 &  1595& {\bf3.408373356 } & 1316781  &  {\bf 0.279757977}&        16512\\
Gaussian  &0.285983801&  1595& 3.42813  & 1316781    &     0.291029215&        16512 \\
Staircase  &0.285481691&  1595 &3.538956881 & 1316781  &   0.312218666&        16512\\
RRonBins  &0.567934513&     94 &  8.24254632  & 401    &    6.187591076&         472 \\
Ours    &{\bf 0.279293299} &  1595&   3.44035 & 1316781  &        0.297134399&        16512\\
\hline
\end{tabular}
\end{table}

\subsection{Time and Space Complexities}
% We provide a comparison of time. The time for the methods are almost the same except RRonBins. In the following we report the time (seconds) over 10 trails for all the methods when $\epsilon=8$ on the the California housing dataset. Similar results for other $\epsilon$ and datasets:
% \begin{table}[ht]
% 	\centering
% 	\caption{Time comparisons.} 
% \label{table:time_space}
% 	\begin{tabular}{c|c|c|c|c|c|c}
% \hline
% Datasets/Method& Laplace&Gaussian&Staircase& RRonBins & RPWithPrior & Unbiased \\ \hline
% Housing ($\epsilon=8$)&698.82&698.7013&700.1707&768.3160&700.7209&700.1371\\
% \hline
% \end{tabular}
% \end{table}
% \begin{itemize}
%     \item \textbf{Discretization in RRonBins}:
% In our implementation, we discretize the original responses by applying torch.round($y$, decimals $=2$). We found that using a larger \textit{decimals} resulted in high computational cost, due to the use of dynamic programming. The total runtime for the experiment was approximately 1 hour and 43 minutes when decimals $=3$.
% \item \textbf{Unbiased}: For the unbiased mechanism proposed by Badanidiyuru et al. (2023), we set the size of both the original and randomized response spaces to 5. The computational cost of the mechanism is highly sensitive to the cardinality of the original and randomized response spaces.
% \end{itemize}

Here we also report the time and space complexities of their differences:
\begin{itemize}
    \item \textbf{RRonBins} \cite{ghazi2022regression}: Compute the optimal $\Phi$ (Algorithm 2),
    \item \textbf{RPWithPrior} (ours): Compute Optimal Interval (Algorithm 1),
    \item \textbf{Unbiased} \cite{Badanidiyuru2023optimal}: Compute OptUnbiasedRand$_\epsilon$ (Algorithm 1).
\end{itemize}
The results are in the following table:
\begin{table}[ht]
	\centering
	\caption{Time and space complexities.} 
\label{table:time_space1}
	\begin{tabular}{c|c|c|c}
\hline
Complexity/Method & RRonBins & RPWithPrior & Unbiased \\ \hline
Time &  $O(n^3_1)$  & $O(n^2_2)$ & $O(n_3^{2.5}n_4^{2.5})$      \\
Space& $O(n_1^2)$& $O(n_2)$  &   $O(n_3n_4)$\\
\hline
\end{tabular}
\end{table}

where $n_1$ is the cardinality of discretized sample space, $n_2$ is the number of intervals and $n_3,n_4$ are the cardinalities of original and randomized spaces.

\end{document}